\newtheorem{theorem}{Theorem}[section]
\newtheorem{lemma}[theorem]{Lemma}
\newtheorem{proposition}[theorem]{Proposition}
\newtheorem{definition}[theorem]{Definition}
\theoremstyle{definition}
\newtheorem{remark}[theorem]{Remark}
\def\JTeX{\leavevmode\lower .5ex\hbox{J}\kern-.17em\TeX}
\def\JLaTeX{\leavevmode\lower.5ex\hbox{J}\kern-.17em\LaTeX}
\title{Metric on Nonlinear Dynamical Systems\\with Perron-Frobenius Operators}
\author{\hspace*{-2mm}
  Isao Ishikawa$^{\dagger\ddagger}$, Keisuke Fujii$^\dagger$, Masahiro Ikeda$^{\dagger\ddagger}$, Yuka Hashimoto$^{\dagger\ddagger}$, Yoshinobu Kawahara$^{\dagger\S}$\\
  $^\dagger$RIKEN Center for Advanced Intelligence Project\\
  $^\ddagger$School of Fundamental Science and Technology, Keio University\\
  $^\S$The Institute of Scientific and Industrial Research, Osaka University\\
  \texttt{\{isao.ishikawa,~keisuke.fujii.zh,~masahiro.ikeda\}@riken.jp} \\
  \texttt{yukahashimoto@keio.jp, ykawahara@sanken.osaka-u.ac.jp} \\
}
\newtheorem{defn}{Definition}[section]
\newcommand{\knl}{\mathfrak{K}}
\newcommand{\TSD}{{\mathscr{T}\!}}
\newcommand{\I}{\mathscr{I}}
\newcommand{\A}{\mathscr{A}}
\newcommand{\hilb}{\mathcal{H}}
\newcommand{\tr}{\mathop{\rm tr}\nolimits}
\newcommand{\p}{{\bf p}}
\newcommand{\ms}[1]{\mathscr{#1}}
\newcommand{\ratint}{\mathbb Z}
\newcommand{\ratnum}{\mathbb{Q}}
\newcommand{\cpxnum}{\mathbb{C}}
\newcommand{\bfb}[1]{{\mathbf #1}}
\newcommand{\uesen}[1]{\overline{#1}}
\begin{document}
\maketitle
\begin{abstract}
The development of a metric for structural data is a long-term problem in pattern recognition and machine learning. In this paper, we develop a general metric for comparing nonlinear dynamical systems that is defined with Perron-Frobenius operators in reproducing kernel Hilbert spaces. Our metric includes the existing fundamental metrics for dynamical systems, which are basically defined with principal angles between some appropriately-chosen subspaces, as its special cases. We also describe the estimation of our metric from finite data. We empirically illustrate our metric with an example of rotation dynamics in a unit disk in a complex plane, and evaluate the performance with real-world time-series data.
\end{abstract}
\section{Introduction}

Classification and recognition has been one of the main focuses of research in machine learning for the past decades. When dealing with some structural data other than vector-valued ones, the development of an algorithm for this problem according to the type of the structure is basically reduced to the design of an appropriate metric or kernel. However, not much of the existing literature has addressed the design of metrics in the context of dynamical systems. To the best of our knowledge, the metric for ARMA models based on comparing their cepstrum coefficients \cite{Martin00} is one of the first papers to address this problem. De~Cock and De~Moor extended this to linear state-space models by considering the subspace angles between the observability subspaces \cite{DeCock-DeMoor02}. Meanwhile, Vishwanathan~et~al.\@ developed a family of kernels for dynamical systems based on the Binet-Cauchy theorem \cite{VSV07}. Chaudhry and Vidal extended this to incorporate the invariance on initial conditions~\cite{CV13}.

As mentioned in some of the above literature, the existing metrics for dynamical systems that have been developed are defined with principal angles between some appropriate subspaces such as column subspaces of observability matrices. However, those are basically restricted to linear dynamical systems although Vishwanathan~et~al.\@ mentioned an extension with reproducing kernels for some specific metrics \cite{VSV07}. Recently, Fujii~et~al.\@ discussed a more general extension of these metrics to nonlinear systems with Koopman operator \cite{Fujii17}. Mezic~et~al.\@ propose metrics of dynamcal systems in the context of ergodic theory via Koopman operators on $L^2$-spaces\cite{Mezic16, MB04}.  The Koopman operator, also known as the composition operator, is a linear operator on an observable for a nonlinear dynamical system \cite{Koopman31}. Thus, by analyzing the operator in place of directly nonlinear dynamics, one could extract more easily some properties about the dynamics. In particular, spectral analysis of Koopman operator has attracted attention with its empirical procedure called dynamic mode decomposition (DMD) in a variety of fields of science and engineering \cite{RMB+09,BSV+15,PE15,BJOK16}.

In this paper, we develop a general metric for nonlinear dynamical systems, which includes the existing fundamental metrics for dynamical systems mentioned above as its special cases. This metric is defined with Perron-Frobenius operators in reproducing kernel Hilbert spaces (RKHSs), which are shown to be essentially equivalent to Koopman operators, and allows us to compare a pair of datasets that are supposed to be generated from nonlinear systems. We also describe the estimation of our metric from finite data. We empirically illustrate our metric using an example of rotation dynamics in a unit disk in a complex plane, and evaluate the performance with real-world time-series data.

The remainder of this paper is organized as follows.
In Section~\ref{ssec:koopman}, we first briefly review 
the definition of Koopman operator, especially the one defined in RKHSs.
In Section~\ref{sec:metric}, we give the definition of our metric for comparing nonlinear dynamical systems (NLDSs) with Koopman operators and, then, describe the estimation of the metric from finite data. In Section~\ref{sec:relation}, we describe the relation of our metric to the existing ones. In Section~\ref{sec:result}, we empirically illustrate our metric with synthetic data and evaluate the performance with real-world data. Finally, we conclude this paper in Section~\ref{sec:concl}.

\section{Perron-Frobenius operator in RKHS}
\label{ssec:koopman}


Consider a discrete-time nonlinear dynamical system $\bm{x}_{t+1}=\bm{f}(\bm{x}_t)$ with time index $t\in\mathbb{T}:=\{0\}\cup\mathbb{N}$ and defined on a state space $\mathcal{M}$ (i.e., $\bm{x}\in\mathcal{M}$), where $\bm{x}$ is the state vector and $\bm{f}\colon \mathcal{M}\to\mathcal{M}$ is a (possibly, nonlinear) state-transition function.
Then, {\em the Koopman operator} (also known as the composition operator), which is denoted by $\mathcal{K}$ here, is a linear operator in a function space $X$ defined by the rule
\begin{equation}
\label{eq:koopman}
\mathcal{K}g = g\circ\bm{f},
\end{equation}
where $g$ is an element of $X$. The domain $\mathscr{D}(\mathcal{K})$ of the Koopman operator $\mathcal{K}$ is $\mathscr{D}(\mathcal{K}):=\{g\in X~|~g\circ\bm{f}\in X\}$, where $\circ$ denotes the composition of $g$ with $\bm{f}$ \cite{Koopman31}. The choice of $X$ depends on the problem considered. In this paper, we consider $X$ as an RKHS. The function $g$ is referred as {\em an observable}. We see that $\mathcal{K}$ acts linearly on the function $g$, even though the dynamics defined by $\bm{f}$ may be nonlinear. 
In recent years, spectral decomposition methods for this operator has attracted attention in a variety of scientific and engineering fields because it could give a global modal description of a nonlinear dynamical system from data. In particular, a variant of estimation algorithms, called dynamic mode decomposition (DMD), has been successfully applied in many real-world problems, such as image processing \cite{Kutz16b}, neuroscience \cite{BJOK16}, and system control \cite{Proctor16}. In the community of machine learning, several algorithmic improvements have been investigated by a formulation with reproducing kernels~\cite{Kawahara16} and in a Bayesian framework~\cite{Takeishi17b}.


Now, let $\mathcal{H}_k$ be the RKHS endowed with a dot product $\left<\cdot,\cdot\right>$ and a positive definite kernel $k\colon\mathcal{X}\times\mathcal{X}\to\mathbb{C}$ (or $\mathbb{R}$), where $\mathcal{X}$ is a set. Here, $\mathcal{H}_k$ is a function space on $\mathcal{X}$.  The corresponding feature map is denoted by $\phi\colon\mathcal{X}\to\mathcal{H}_k$. Also, assume $\mathcal{M}\subset\mathcal{X}$, and define the closed subspace $\mathcal{H}_{k,\mathcal{M}}\subset\mathcal{H}_k$ by the closure of the vector space generated by $\phi(\bm{x})$ for $\forall\bm{x}\in\mathcal{M}$, i.e. $\mathcal{H}_{k,\mathcal{M}}:=\overline{{\rm span}\{\phi(\bm{x})\ |\ \bm{x}\in \mathcal{M}\}}$.
Then, the {\em Perron-Frobenius operator in RKHS} associated with $\bm{f}$ (see \cite{Kawahara16}, note that $K_f$ is called Koopman operator on the feature map $\phi$ in the literature), $K_{\bm{f}}\colon \mathcal{H}_{k,\mathcal{M}}\rightarrow \mathcal{H}_{k,\mathcal{M}}$, is defined as a linear operator with dense domain $\mathscr{D}(K_f):={\rm span}\left(\phi(\mathcal{M})\right)$ satisfying for all $\bm{x}\in\mathcal{M}$, 
\begin{equation}
\label{eq:koopman_f}
K_{\bm{f}} [\phi(\bm{x})] = \phi(\bm{f}(\bm{x})).
\end{equation}
Since $K_f$ is densely defined, there exists the adjoint operator $K_f^*$.  In the following proposition, we see that $K_f^*$ is essentially the same as Koopman operator $\mathcal{K}$.
\begin{proposition}
Let $X=H$ be the RKHS associated with the positive definite kernel $k|_{\mathcal{M}\times\mathcal{M}}$ defined by the restriction of $k$ to $\mathcal{M} \times \mathcal{M}$, which is a function space on $\mathcal{M}$. Let $\rho: \mathcal{H}_{k,\mathcal{M}}\rightarrow H$ be a linear isomorphism defined via the restriction of functions from $\mathcal{X}$ to $\mathcal{M}$.  Then,  we have
\begin{equation*}
 \rho K_{\bm{f}}^*\rho^{-1} =\mathcal{K},
\end{equation*}
where $(\cdot)^*$ means the Hermitian transpose.
\end{proposition}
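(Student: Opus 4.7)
The plan is to identify $K_{\bm{f}}^*$ as a precomposition operator directly, using only the reproducing property and the defining identity $K_{\bm{f}}\phi(\bm{x})=\phi(\bm{f}(\bm{x}))$, and then to transfer this description through $\rho$ to the RKHS $H$ on $\mathcal{M}$.

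First, I would verify that $\rho$ is well defined as a Hilbert space isomorphism. Since $\phi(\bm{x})=k(\cdot,\bm{x})$ for $\bm{x}\in\mathcal{M}$ restricts to the reproducing kernel $\tilde{\phi}(\bm{x}):=k|_{\mathcal{M}\times\mathcal{M}}(\cdot,\bm{x})$ of $H$, the restriction sends the spanning family of $\mathcal{H}_{k,\mathcal{M}}$ to the spanning family of $H$ and preserves inner products pairwise, because $\langle\phi(\bm{x}),\phi(\bm{y})\rangle_{\mathcal{H}_k}=k(\bm{x},\bm{y})=\langle\tilde{\phi}(\bm{x}),\tilde{\phi}(\bm{y})\rangle_H$ for $\bm{x},\bm{y}\in\mathcal{M}$. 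Injectivity on finite linear combinations extends by continuity to the closure, giving an isometric isomorphism $\rho$ with $\rho\phi(\bm{x})=\tilde{\phi}(\bm{x})$; the inverse $\rho^{-1}$ extends any $g\in H$ to the unique element of $\mathcal{H}_{k,\mathcal{M}}$ whose restriction to $\mathcal{M}$ equals $g$.

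Next, I would compute the action of $K_{\bm{f}}^*$ pointwise. Take any $\tilde g\in\mathcal{H}_{k,\mathcal{M}}$ in the domain of $K_{\bm{f}}^*$ (i.e.\ in the domain of its closure, which contains $\rho^{-1}H$ when $g\circ\bm{f}\in H$). For every $\bm{x}\in\mathcal{M}$, using that $\phi(\bm{x})\in\mathscr{D}(K_{\bm{f}})$ together with the defining relation \eqref{eq:koopman_f} and the reproducing property,
\begin{equation*}
(K_{\bm{f}}^*\tilde g)(\bm{x})
=\langle K_{\bm{f}}^*\tilde g,\phi(\bm{x})\rangle_{\mathcal{H}_k}
=\langle \tilde g, K_{\bm{f}}\phi(\bm{x})\rangle_{\mathcal{H}_k}
=\langle \tilde g,\phi(\bm{f}(\bm{x}))\rangle_{\mathcal{H}_k}
=\tilde g(\bm{f}(\bm{x})).
\end{equation*}
Thus, as a function on $\mathcal{M}$, $K_{\bm{f}}^*\tilde g=\tilde g\circ\bm{f}$.

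Finally I would push this through $\rho$. For $g\in\mathscr{D}(\mathcal{K})$, set $\tilde g=\rho^{-1}g$, so $\tilde g|_{\mathcal{M}}=g$. By the previous step, $(\rho K_{\bm{f}}^*\rho^{-1}g)(\bm{x})=(K_{\bm{f}}^*\tilde g)|_{\mathcal{M}}(\bm{x})=\tilde g(\bm{f}(\bm{x}))=g(\bm{f}(\bm{x}))=(\mathcal{K}g)(\bm{x})$ for every $\bm{x}\in\mathcal{M}$, so the two operators agree. The anticipated obstacle is not the algebraic identity, which is short, but the domain bookkeeping: $K_{\bm{f}}$ is only densely defined, so one has to be careful that $\rho^{-1}(\mathscr{D}(\mathcal{K}))$ sits in $\mathscr{D}(K_{\bm{f}}^*)$ and that the equality $\rho K_{\bm{f}}^*\rho^{-1}=\mathcal{K}$ is meant in the sense of equality of operators on a common dense domain (with matching closures). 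Once this is settled, the rest is a one-line reproducing-kernel calculation.
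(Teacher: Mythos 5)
Your argument is correct and is essentially the paper's own proof: both rest on the single chain $\langle \tilde g, K_{\bm f}\phi(\bm x)\rangle=\langle \tilde g,\phi(\bm f(\bm x))\rangle=\tilde g(\bm f(\bm x))=(\mathcal K(\rho\tilde g))(\bm x)$ via the reproducing property and the defining relation \eqref{eq:koopman_f}, then transfer through the unitary $\rho$ (the paper phrases this as $\rho^*=\rho^{-1}$ with $\rho\circ\phi$ the feature map of $H$, you phrase it as a pointwise identification of $K_{\bm f}^*$ as precomposition). Your extra remarks on the isometry of $\rho$ and on domain bookkeeping are sensible elaborations of what the paper leaves implicit, not a different route.
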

\begin{proof}\vspace*{-2mm}
Let $g\in \mathscr{D}(\mathcal{K})$. 
Since the feature map for $H$ is the same as $\rho\circ\phi$, by the reproducing property,  $\langle g, \rho K_f(\phi(\bm{x}))\rangle_H=\langle g, \rho\circ\phi(\bm{f}(\bm{x}))\rangle_H=g\circ\bm{f}(\bm{x})=\langle\mathcal{K}g, \rho\circ\phi(\bm{x})\rangle_H$.
Thus the definitions \eqref{eq:koopman}, \eqref{eq:koopman_f}, and the fact $\rho^*=\rho^{-1}$ show the statement.
\vspace*{-1mm}
\end{proof}


\if0

Let $X$ be a set.  Let $k: X\times X\rightarrow \mathbb{C}$ (or $\mathbb{R}$) be a positive definite kernel. Then there exists a unique pair of Hilbert space (the reproducing kernel Hilbert space) $\mathcal{H}_k$ with an inner product $\langle,\,\rangle$ and a map (the feature map) $\phi: X\rightarrow \mathcal{H}_k$ such that $\mathcal{H}_k$ is generated by $\phi(X)$ and $\langle\phi(x),\,\phi(y)\rangle=k(x,y)$ for $x,y\in X$.  For a subset $Y\subset X$, we define the closed subspace $\mathcal{H}_{k,Y}\subset H_k$ by the closure of the vector space generated by $\phi(Y)$.  Let $f: Y\rightarrow Y$ be a dynamical system.  In \cite{Kawahara16}, Kawahara defines the {\em Koopman operator for the feature map associated with $f$} by a linear operator $K_f: \mathcal{H}_{k,Y}\rightarrow \mathcal{H}_{k,Y}$ satisfying the following two conditions:
\begin{enumerate}
    \item $K_f$ is a bounded operator, and
    \item for any $y\in Y$, $K_f(\phi(y))=\phi(f(y))$.
\end{enumerate}
In general, $K_f$ does not always exist, but if exists, it is unique.  The Hilbert space $\mathcal{H}_{k,Y}$ is realized as a subspace of the space of maps from $Y$ to $\mathbb{C}$. Then $K_f$ is the same as the Perron-Frobenius operator associated with $f$ in this realization.

\fi
\section{Metric on NLDSs with Perron-Frobenius Operators in RKHSs}
\label{sec:metric}

We propose a general metric for the comparison of nonlinear dynamical systems, which is defined with Perron-Frobenius operators in RKHSs. Intuitively, the metric compares the behaviors of dynamical systems over infinite time. To ensure the convergence property, we consider the ratio of metrics, namely angles instead of directly considering exponential decay terms.  We first give the definition in Subsection~\ref{ssec:def^metric}, and then derive an estimator of the metric from finite data in Subsection~\ref{ssec:estimation}.

\subsection{Definition}
\label{ssec:def^metric}


Let $\hilb_{\rm ob}$ be a Hilbert space and $\mathcal{M}\subset\mathcal{X}$ a subset. Let $h:\mathcal{M}\rightarrow\hilb_{\rm ob}$ be a map, often called an observable. We define the {\em observable operator} for $h$ by a linear operator $L_h: \hilb_{k, \mathcal{M}} \rightarrow\hilb_{\rm ob}$ such that $h=L_h\circ \phi$. 
We give two examples here: First, in the case of $\hilb_{\rm ob}=\mathbb{C}^d$ and $h(\bm{x})=(g_1(\bm{x}),\dots, g_m(\bm{x}))$ for some $g_1,\dots,g_m\in \hilb_k$, the observable operator is $L_h(v):=(\langle g_i, v\rangle)_{i=1}^m$.  This situation appears, for example, in the context of DMD,  where observed data is obtained by values of functions in RKHS. Secondly, in the case of $\hilb_{\rm ob}=\hilb_{k,\mathcal{M}}$ and $h=\phi|_{\mathcal{M}}$, the observable operator is $L_h(v)=v$.  This situation appears when we can observe the state space $\mathcal{X}$, and we try to get more detailed information by observing data sent to RKHS via the feature map.

Let $\hilb_{\rm in}$ be a Hilbert space.  we refer to $\hilb_{\rm in}$ as an {\em initial value space}.  We call a linear operator $\mathscr{I}: \hilb_{\rm in}\rightarrow\hilb_{k,\mathcal{M}}$ {\em an initial value operator} on $\mathcal{M}$ if $\mathscr{I}$ is a bounded operator.  
Initial value operators are regarded as expressions of initial values in terms of linear operators.  In fact,  in the case of $\hilb_{\rm in}=\mathbb{C}^N$ and let $\bm{x}_1,\dots,\bm{x}_N\in\mathcal{M}$. Let $\mathscr{I}:=(\phi(\bm{x}_1),\dots,\phi(\bm{x}_N))$ be an initial value operator on $\mathcal{M}$, which is a linear operator defined by $\mathscr{I}((a_i)_{i=1}^N)=\sum_ia_i\phi(\bm{x}_i)$.  Let $K_{\bm{f}}$ be a Perron-Frobenius operator associated with a dynamical system $\bm{f}:\mathcal{M}\rightarrow\mathcal{M}$. Then for any positive integer $n>0$, we have $K_f^n\mathscr{I}((a_i)_{i=1}^N)=\sum_ia_i\phi(\bm{f}^n(\bm{x}_i))$, and $K_f^n\mathscr{I}$ is a linear operator including information at time $n$ of the orbits of the dynamical system $\bm{f}$ with inital values $\bm{x}_1,\dots,\bm{x}_N$.

Now, we define {\em triples of dynamical systems}. A triple of a dynamical system with respect to an initial value space $\hilb_{\rm in}$ and an observable space $\hilb_{\rm ob}$  is a triple $(\bm{f},h,\mathscr{I})$, where the first component $\bm{f}:\mathcal{M}\rightarrow\mathcal{M}$ is a dynamical system on a subset $\mathcal{M}\subset\mathcal{X}$ ($\mathcal{M}$ depends on $\bm{f}$) with Perron-Frobenius operator $K_{\bm {f}}$, the second component $h:\mathcal{M}\rightarrow\hilb_{\rm ob}$ is an observable with an observable operator $L_h$, and the third component $\mathscr{I}:\hilb_{\rm in}\rightarrow\hilb_{k,\mathcal{M}}$ is an initial value operator on $\mathcal{M}$, such that for any $r\ge0$, the composition $L_hK_{\bm{f}}^r\I$ is well-defined and a Hilbert Schmidt operator.  We denote by ${\TSD(\hilb_{\rm in}, \hilb_{\rm ob})}$ the set of triples of dynamical systems with respect to an initial value space $\hilb_{\rm in}$ and an observable space $\hilb_{\rm ob}$.


For two triples $D_1=(\bm{f}_1, h_1, \mathscr{I}_1), D_2=(\bm{f}_2, h_2, \mathscr{I}_2)\in {\TSD(\hilb_{\rm in}, \hilb_{\rm ob})}$, and 
for $T,m\in\mathbb{N}$, we first define
\[\knl_m^T\left(D_1,\,D_2\right):={\rm tr}\left(\bigwedge^m \sum_{r=0}^{T-1}\left(L_{h_2}K_{\bm{f}_2}^r\mathscr{I}_2\right)^*L_{h_1}K_{\bm{f}_1}^r \mathscr{I}_1\right)\in \mathbb{C},\]
where the symbol $\wedge^m$ is the $m$-th exterior product (see Appendix \ref{exterior product}).
We note that since $K_{\bm{f}_i}$ is bounded, we regard $K_{\bm{f}_i}$ as a unique extension of $K_{\bm{f}_i}$ to a bounded linear operator with domain $\hilb_{k,\mathcal{M}}$.  
\begin{proposition}
\label{positive definiteness of d}
The function $\knl_m^T$ is a positive definite kernel on ${\TSD}(\hilb_{\rm in}, \hilb_{\rm ob})$.
\end{proposition}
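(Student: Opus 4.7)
The plan is to rewrite $\knl_m^T(D_1,D_2)$ as the Hilbert--Schmidt inner product of two operators $\Phi(D_1)$ and $\Phi(D_2)$, each depending only on a single triple; positive definiteness is then automatic, since any map $\Phi$ into a Hilbert space induces a positive definite kernel $(D_1,D_2)\mapsto \langle \Phi(D_1),\Phi(D_2)\rangle$.

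Concretely, for each triple $D=(\bm{f},h,\I)\in \TSD(\hilb_{\rm in},\hilb_{\rm ob})$, I bundle the operators $L_h K_{\bm{f}}^r\I$ for $r=0,\dots,T-1$ into a single operator
\[A_D:=\bigoplus_{r=0}^{T-1} L_h K_{\bm{f}}^r\I \,:\, \hilb_{\rm in}\longrightarrow \hilb_{\rm ob}^{\oplus T}.\]
Each summand is Hilbert--Schmidt by the definition of $\TSD(\hilb_{\rm in},\hilb_{\rm ob})$, so $A_D$ itself is Hilbert--Schmidt. A direct computation gives
\[A_{D_2}^*A_{D_1}=\sum_{r=0}^{T-1}(L_{h_2}K_{\bm{f}_2}^r\I_2)^* L_{h_1}K_{\bm{f}_1}^r\I_1,\]
so $\knl_m^T(D_1,D_2)=\tr(\bigwedge^m(A_{D_2}^*A_{D_1}))$. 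Now I invoke two standard functorial identities for the $m$-th exterior power, namely $\bigwedge^m(AB)=(\bigwedge^m A)(\bigwedge^m B)$ and $\bigwedge^m(A^*)=(\bigwedge^m A)^*$, both recorded in the appendix on exterior products, to obtain
\[\knl_m^T(D_1,D_2)=\tr\bigl((\bigwedge^m A_{D_2})^*(\bigwedge^m A_{D_1})\bigr)=\bigl\langle \bigwedge^m A_{D_1},\, \bigwedge^m A_{D_2}\bigr\rangle_{\mathrm{HS}}.\]

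Setting $\Phi(D):=\bigwedge^m A_D$, this exhibits $\knl_m^T$ as the pullback along $\Phi$ of the Hilbert--Schmidt inner product, so for any $D_1,\dots,D_N\in \TSD(\hilb_{\rm in},\hilb_{\rm ob})$ and any $c_1,\dots,c_N\in\mathbb{C}$,
\[\sum_{i,j}c_i\overline{c_j}\,\knl_m^T(D_i,D_j)=\Bigl\|\sum_{i=1}^N c_i\,\bigwedge^m A_{D_i}\Bigr\|_{\mathrm{HS}}^{\,2}\ge 0,\]
which is exactly positive definiteness. The main technical point to handle is the infinite-dimensional bookkeeping: one must verify that $\bigwedge^m A_D$ is itself Hilbert--Schmidt (so that the inner product above is finite) and that the two functorial identities, which are immediate on simple tensors in the algebraic exterior power, extend to the Hilbert-space completion. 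Both facts reduce to an estimate of the form $\|\bigwedge^m A\|_{\mathrm{HS}}\le \|A\|_{\mathrm{HS}}^m/\sqrt{m!}$, which should already be among the preparatory lemmas collected in the appendix on exterior products.
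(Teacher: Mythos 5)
Your argument coincides with the paper's own proof: the authors define the stacked operator $S(D_i):=\left(L_{h_i}\mathscr{I}_i,\dots,L_{h_i}K_{\bm{f}_i}^{T-1}\mathscr{I}_i\right)\colon\hilb_{\rm in}\to\hilb_{\rm ob}^T$ (your $A_D$), note that $\bigwedge^m\!\left(S(D_2)^*S(D_1)\right)=\left(\bigwedge^mS(D_2)\right)^*\left(\bigwedge^mS(D_1)\right)$, and conclude that $\knl_m^T$ is the Hilbert--Schmidt inner product of the feature vectors $\bigwedge^mS(D_i)$, exactly as you do. Your additional remark on why $\bigwedge^m A_D$ is itself Hilbert--Schmidt, via $\|\bigwedge^mA\|_{\mathrm{HS}}\le\|A\|_{\mathrm{HS}}^m/\sqrt{m!}$, is a correct point that the paper leaves implicit.
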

\begin{proof}
See Appendix \ref{proof of positive definiteness}
\end{proof}
Next, for positive number $\varepsilon>0$, we define $A_m^T$ with $\knl_m^T$ by
\begin{align*}
A_m^T\left(D_1,\,D_2\right):=
\lim_{\epsilon\rightarrow +0}\frac{\left|\epsilon+\knl_m^T\left(D_1,D_2\right)\right|^2}{\left(\epsilon+\knl_m^T\left(D_1, D_1\right)\right) \left(\epsilon+\knl_m^T\left(D_2,D_2\right)\right)}
\in[0,1].
\end{align*}
We remark that for $D\in{\TSD}(\hilb_{\rm in}, \hilb_{\rm ob})$, $\left(\knl_m^T(D,D)\right)_{T=1}^\infty$ is a non-negative increasing sequence.
Now, we denote by $\ell^\infty$ the Banach space of bounded sequences of complex numbers, and define $\mathbf{A}_m: \TSD(\hilb_{\rm in}, \hilb_{\rm ob})^2\rightarrow\ell^\infty$ by
 \[\mathbf{A}_m
 :=\left(A_m^T\right)_{T=1}^\infty\]

Moreover, we introduce {\em Banach limits} for elements of $\ell^\infty$.  
The Banach limit is a bounded linear functional $\mathcal{B}\colon \ell^\infty\rightarrow \cpxnum$ satisfying $\mathcal{B}\left((1)_{n=1}^\infty\right)=1$, $\mathcal{B}\left((z_n)_{n=1}^\infty\right)=\mathcal{B}\left((z_{n+1})_{n=1}^\infty\right)$ for any $(z_n)_n$, and $\mathcal{B}((z_n)_{n=1}^{\infty})\ge0$ for any non-negative real sequence $(z_n)_{n=1}^\infty$, namely $z_n\ge0$ for all $n\ge1$. 
We remark that if $(z_n)_n\in\ell^\infty$ converges a complex number $\alpha$, then for any Banach limit $\mathcal{B}$, $\mathcal{B}\left((z_n)_{n=1}^\infty\right)=\alpha$.   The existence of the Banach limits is first introduced by Banach \cite{Banach95} and proved through the Hahn-Banach theorem. In general, the Banach limit is not unique.     
\begin{defn}
For an integer $m>0$ and a Banach limit $\mathcal{B}$, a positive definite kernel $\A_m^\mathcal{B}$ is defined by
\[\A_m^\mathcal{B}:=\mathcal{B}\left(\mathbf{A}_m\right).\]
\end{defn}
We remark that positive definiteness of $\A_m^{\mathcal{B}}$ follows Proposition \ref{positive definiteness of d} and the properties of the Banach limit.
We then simply denote $\A_m^\mathcal{B}(D_1, D_2)$ by $\A_m(D_1, D_2)$ if $\mathbf{A}_m(D_1, D_2)$ converges since that is independent of the choice of $\mathcal{B}$.  

In general, a Banach limit $\mathcal{B}$ is hard to compute. However, under some assumption and suitable choice of $\mathcal{B}$, we prove that  $\A_m^{\mathcal{B}}$ is computable in Proposition \ref{sufficient condition for convergence} below. Thus, we obtain an estimation formula of $\A_m^\mathcal{B}$ (see \cite{Sucheston64}, \cite{Sucheston67}, \cite{FL06} for other results on the estimation of Banach limit). In the following proposition, we show that we can construct a pseudo-metric from the positive definite kernel $\A_m^\mathcal{B}$:

\begin{proposition}
\label{distance ni naru}
Let $\mathcal{B}$ be a Banach limit.  For $m>0$, $\sqrt{1-\ms{A}_m^{\mathcal{B}}(\cdot, \cdot)}$ is a pseudo-metric on ${\TSD(\hilb_{\rm in}, \hilb_{\rm ob})}$.
\end{proposition}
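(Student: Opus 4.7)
The plan is to combine the positive definiteness of $\ms{A}_m^{\mc{B}}$ with the identity $\ms{A}_m^{\mc{B}}(D,D)=1$ and then invoke the standard construction: a positive definite kernel with unit diagonal induces a pseudo-metric via the distance in its associated feature Hilbert space.

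First I would verify the diagonal. By Proposition \ref{positive definiteness of d}, $\knl_m^T(D,D)\ge 0$ for every $T$, so the definition of $A_m^T$ gives
\[
A_m^T(D,D)=\lim_{\epsilon\to +0}\frac{(\epsilon+\knl_m^T(D,D))^2}{(\epsilon+\knl_m^T(D,D))^2}=1.
\]
Hence $\mathbf{A}_m(D,D)=(1,1,\dots)$, and the normalization of the Banach limit gives $\ms{A}_m^{\mc{B}}(D,D)=1$. Next I would check that $\ms{A}_m^{\mc{B}}(D_1,D_2)$ is real and lies in $[0,1]$: each coordinate $A_m^T(D_1,D_2)\in[0,1]\subset\mathbb{R}$ by construction, so $\mathbf{A}_m(D_1,D_2)$ is a bounded, real, non-negative sequence, and the positivity and boundedness of $\mc{B}$ force $\ms{A}_m^{\mc{B}}(D_1,D_2)\in[0,1]$. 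Symmetry $\ms{A}_m^{\mc{B}}(D_1,D_2)=\ms{A}_m^{\mc{B}}(D_2,D_1)$ is inherited from $|\knl_m^T(D_1,D_2)|^2=|\knl_m^T(D_2,D_1)|^2$.

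With these facts in hand I would apply the Moore--Aronszajn construction to the positive definite kernel $\ms{A}_m^{\mc{B}}$ (whose positive definiteness is noted immediately after its definition). This yields a Hilbert space $\mc{K}$ and a feature map $\Psi\colon\TSD(\hilb_{\rm in},\hilb_{\rm ob})\to\mc{K}$ with $\ms{A}_m^{\mc{B}}(D_1,D_2)=\langle\Psi(D_1),\Psi(D_2)\rangle_{\mc{K}}$. Because $\ms{A}_m^{\mc{B}}$ is real-valued and equals $1$ on the diagonal,
\[
\|\Psi(D_1)-\Psi(D_2)\|_{\mc{K}}^2=\ms{A}_m^{\mc{B}}(D_1,D_1)-2\ms{A}_m^{\mc{B}}(D_1,D_2)+\ms{A}_m^{\mc{B}}(D_2,D_2)=2\bigl(1-\ms{A}_m^{\mc{B}}(D_1,D_2)\bigr),
\]
so $\sqrt{1-\ms{A}_m^{\mc{B}}(D_1,D_2)}=\tfrac{1}{\sqrt 2}\|\Psi(D_1)-\Psi(D_2)\|_{\mc{K}}$. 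Non-negativity, symmetry, the triangle inequality, and vanishing on the diagonal then transfer directly from the Hilbert-space norm, producing a pseudo-metric; it fails to be a metric in general only because $\Psi(D_1)=\Psi(D_2)$ need not force $D_1=D_2$.

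The main obstacle is ensuring that $\ms{A}_m^{\mc{B}}$ is real so that the subtraction under the square root is legitimate; once that is secured, the Hilbert-space embedding supplies the rest automatically. A minor point to handle is the $\epsilon\to +0$ regularization when $\knl_m^T(D,D)=0$, but in that degenerate case the ratio is identically $1$ for every $\epsilon>0$, so the limit is still $1$ and the diagonal argument goes through unchanged.
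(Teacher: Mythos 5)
Your proposal is correct and follows essentially the same route as the paper: the paper's proof likewise invokes the positive definiteness of $\ms{A}_m^{\mathcal{B}}$ to obtain an RKHS with feature map $\tilde{\phi}$ and concludes from the identity $\sqrt{1-\ms{A}_m^{\mathcal{B}}(D_1,D_2)}=2^{-1/2}\|\tilde{\phi}(D_1)-\tilde{\phi}(D_2)\|$. Your explicit verification that $A_m^T(D,D)=1$ (hence $\ms{A}_m^{\mathcal{B}}(D,D)=1$) and that the kernel is real-valued fills in details the paper leaves implicit, but the argument is the same.
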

\begin{proof}
See Appendix \ref{proof distance}.
\end{proof}
\begin{remark}
\label{rem: generalization}
Although we defined $\knl_m^T$ with RKHS, it can be defined in a more general situation as follows. Let $\mathcal{H}$, $\mathcal{H}'$ and $\mathcal{H}''$ be Hilbert spaces. For $i=1,2$, let $V_i\subset \mathcal{H}$ be a closed subspace, $K_i\colon V_i\rightarrow V_i$ and $L_i\colon V_i\rightarrow\mathcal{H}''$ linear operators, and let $\mathscr{I}_i\colon \mathcal{H}'\rightarrow V_i$ be  a bounded operator. Then, we can define $\knl_m^T$ between the triples $(K_1, L_1, \mathscr{I}_1)$ and $(K_2, L_2, \mathscr{I}_2)$ in the similar manner.
\end{remark}

\subsection{Estimation from finite data}
\label{ssec:estimation}
Now we derive an formula to compute the above metric from finite data, which allows us to compare several time-series data generated from dynamical systems just by evaluating the values of kernel functions.  First, we argue the computability of $\A_m^{\mathcal{B}}(D_1, D_2)$ and then state the formula for computation.
In this section, the initial value space is of finite dimension: $\hilb_{\rm in}=\mathbb{C}^N$, and for $v_1,\dots, v_N\in \hilb_{k,\mathcal{M}}$. 
We define a linear operator $(v_1,\dots, v_N): \mathbb{C}^N\rightarrow\hilb_{k,\mathcal{M}}$ by $(a_i)_{i=1}^N\mapsto\sum_{i=1}^Na_iv_i$. 
We note that any linear operator $\I: \hilb_{\rm in}=\mathbb{C}^N \rightarrow\hilb_{k,\mathcal{M}}$ is an initial value operator), and, by putting $v_i:=\I((0,\dots,0,\overset{i}{1},0,\dots,0))$, we have $\I=(v_1,\dots,v_N)$.


\begin{definition}
Let $D=\left(\bm{f}, h, \I\right)\in{\TSD}\left(\mathbb{C}^N, \hilb_{\rm ob}\right)$. We call $D$ {\em admissible} if there exists $K_{\bm{f}}$'s eigen-vectors $\varphi_1, \varphi_2,\dots \in\mathcal{H}_{k,\mathcal{M}}$ with $||\varphi_n||=1$ and $K_{\bm{f}}\varphi_n=\lambda_n\varphi_n$ for all $n\ge0$ such that $|\lambda_1|\ge|\lambda_2|\ge\dots$ and each $v_i$ is expressed as $v_i=\sum_{n=1}^\infty a_{i,n}\varphi_n$ with $\sum_{n=1}^\infty|a_{i,n}|<\infty$, where $v_i:=\I((0,\dots,0,\overset{i}{1},0,\dots,0))$.
\end{definition}
\begin{definition}
\label{semi-stable}
The triple $D=\left(\bm{f}, h, \I\right)\in{\TSD}\left(\mathbb{C}^N, \hilb_{\rm ob}\right)$ is semi-stable if $D$ is admissible and $|\lambda_1|\le 1$.
\end{definition}
Then, we have the following asymptotic properties of $\mathbf{A}_m$.
\begin{proposition}
\label{sufficient condition for convergence}
Let $D_1, D_2\in {\TSD}\left(\mathbb{C}^N, \hilb_{\rm ob}\right)$.
If $D_1$ and $D_2$ are semi-stable, then the sequence $\mathbf{A}_m\left(D_1, D_2\right)$ converges and the limit is equal to $\A_m^\mathcal{B}\left(D_1, D_2\right)$ for any Banach limit $\mathcal{B}$. Similarly, let $C$ be the Ces\`aro operator, namely, $C$ is defined to be $C((x_n)_{n=1}^\infty):=\left(n^{-1}\sum_{k=1}^nx_n\right)_{n=1}^\infty$.
If $D_1$ and $D_2$ are admissible, then 
$C\mathbf{A}_m\left(D_1, D_2\right)$ 
converges and the limit is equal to $\A_m^\mathcal{B}\left(D_1, D_2\right)$ for any Banach limit $\mathcal{B}$ with $\mathcal{B}C=\mathcal{B}$.
\end{proposition}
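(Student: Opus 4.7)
The plan is to reduce both claims to a spectral analysis of the finite-rank operator
\[
S_T^{ij} := \sum_{r=0}^{T-1}\bigl(L_{h_j}K_{\bm{f}_j}^r\I_j\bigr)^* L_{h_i}K_{\bm{f}_i}^r\I_i
\]
on $\mathbb{C}^N$, so that $\knl_m^T(D_i,D_j)=\tr\bigwedge^m S_T^{ij}$. I would begin by invoking admissibility: writing $\I_i e_p=\sum_n a_{p,n}^{(i)}\varphi_n^{(i)}$ with $\sum_n|a_{p,n}^{(i)}|<\infty$ and using $K_{\bm{f}_i}^r\varphi_n^{(i)}=(\lambda_n^{(i)})^r\varphi_n^{(i)}$, each matrix entry of $S_T^{ij}$ becomes an absolutely convergent double series
\[
(S_T^{ij})_{p,q}=\sum_{n,n'} a_{q,n}^{(i)}\,\overline{a_{p,n'}^{(j)}}\,b_{n,n'}^{ij}\, c_{n,n'}^{ij}(T),\qquad c_{n,n'}^{ij}(T):=\sum_{r=0}^{T-1}\bigl(\lambda_n^{(i)}\overline{\lambda_{n'}^{(j)}}\bigr)^r,
\]
with $b_{n,n'}^{ij}:=\langle L_{h_i}\varphi_n^{(i)}, L_{h_j}\varphi_{n'}^{(j)}\rangle_{\hilb_{\rm ob}}$. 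Everything then reduces to how the scalars $c_{n,n'}^{ij}(T)$ propagate through the $m$-th exterior power and through the ratio $A_m^T$.

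In the semi-stable case, $|\lambda_n^{(i)}|\le 1$ forces $|\lambda_n^{(i)}\overline{\lambda_{n'}^{(j)}}|\le 1$, and $c_{n,n'}^{ij}(T)$ is either equal to $T$ (when $\lambda_n^{(i)}\overline{\lambda_{n'}^{(j)}}=1$) or uniformly bounded in $T$. This yields a decomposition $S_T^{ij}=T M^{ij}+R_T^{ij}$ with $R_T^{ij}$ bounded in operator norm. Since each entry of $\bigwedge^m A$ is a polynomial of degree $m$ in the entries of the $N\times N$ matrix $A$, expanding gives $\knl_m^T(D_i,D_j)=T^m\,\tr\bigwedge^m M^{ij}+O(T^{m-1})$. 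Dividing numerator and denominator of $A_m^T$ by $T^{2m}$ then produces the limit $|\tr\bigwedge^m M^{12}|^2/(\tr\bigwedge^m M^{11}\cdot \tr\bigwedge^m M^{22})$ in the generic case; degenerate cases where some $\tr\bigwedge^m M^{ii}=0$ are handled by the Cauchy--Schwarz bound coming from positive definiteness (Proposition \ref{positive definiteness of d}) together with the built-in $\varepsilon$-regularization in the definition of $A_m^T$. Once $\mathbf{A}_m(D_1,D_2)$ is shown to converge, every Banach limit returns the same ordinary limit.

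For the admissible case, eigenvalues of modulus $>1$ are permitted and $c_{n,n'}^{ij}(T)$ may grow geometrically. The key observation is that the dominant exponential scales of $\knl_m^T(D_1,D_1)$, $\knl_m^T(D_2,D_2)$ and $\knl_m^T(D_1,D_2)$ factor as products of the same leading moduli, so the exponential growth cancels in the ratio $A_m^T$; what remains is, asymptotically, an almost-periodic function of $T$ built from the unit-modulus phases of the leading $\lambda_n^{(i)}\overline{\lambda_{n'}^{(j)}}$. Cesàro averaging sends every nontrivial character $T\mapsto\mu^T$ to $0$, so $C\mathbf{A}_m(D_1,D_2)$ converges. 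Combined with $\mathcal{B}C=\mathcal{B}$ and the fact that Banach limits extend ordinary limits, this identifies $\A_m^{\mathcal{B}}(D_1,D_2)$ with the Cesàro limit.

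The main obstacle is the admissible case. One must simultaneously track several moduli and phase factors through both the numerator and denominator of $A_m^T$, verify that the leading exponential growth really does cancel in the ratio, and control degenerate situations where several eigenvalues share the maximal modulus or where the principal leading coefficients vanish, forcing a subleading analysis. The semi-stable case is comparatively clean because all relevant geometric sums grow at most linearly in $T$, so the $T^m$ scaling of the exterior-power trace delivers the limit immediately.
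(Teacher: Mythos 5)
Your overall strategy matches the paper's: expand $\knl_m^T$ through the eigen-decomposition supplied by admissibility, read off polynomial growth in $T$ in the semi-stable case, and use Ces\`aro averaging against the unit-modulus phases in the admissible case. But two steps are genuinely missing. In the semi-stable case, the degenerate situation you defer to ``Cauchy--Schwarz plus the $\varepsilon$-regularization'' is not resolved by those tools: they only give $A_m^T\in[0,1]$, i.e.\ boundedness, not convergence of the sequence. When $\tr\bigwedge^m M^{ii}=0$, the subleading part of $\knl_m^T(D_i,D_i)$ still contains bounded but non-convergent oscillating factors $(1-\mu^T)/(1-\mu)$ with $|\mu|=1$, $\mu\neq1$, multiplied by powers of $T$ up to $T^{m-1}$, so the ratio need not stabilize without a further argument. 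The paper's way out is the monotonicity of $T\mapsto\knl_m^T(D,D)$: each denominator is either identically zero (in which case the $\varepsilon$-limit makes $A_m^T$ constant) or eventually bounded below by a positive constant, and one then compares leading orders $c_{ij}T^{n_{ij}}+o(T^{n_{ij}})$ with $2n_{12}\le n_{11}+n_{22}$.

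The admissible case is the more serious gap, and you essentially acknowledge it as ``the main obstacle'' without overcoming it. After the exponential scales cancel, $A_m^T$ is a \emph{ratio} of trigonometric-type sums, not a linear combination of characters, so ``Ces\`aro averaging sends every nontrivial character to $0$'' does not apply to it directly. The paper's proof supplies exactly the missing tool: a lemma asserting that $\frac1T\sum_{t\le T}f(\bm{\zeta}^t,\bm{x}_t)$ converges for any continuous $f$ on ${\rm S}^m\times\mathbb{C}^n$, $\bm{\zeta}$ on the torus and $\bm{x}_t\to0$, proved via Weierstrass approximation reducing to the monomial cases $\zeta^t$ and $\zeta^t y_t$. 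To put $A_m^T$ into that form one must also first truncate the infinite eigen-expansions to finitely many modes uniformly in $T$ (the paper's $K_{\ell,N}^T$ and $\mathcal{A}_N$, with $C\mathcal{A}_N\to C\mathbf{A}_m$), which your proposal skips. Your final step --- that convergence of $C\mathbf{A}_m$ together with $\mathcal{B}C=\mathcal{B}$ and the fact that Banach limits extend ordinary limits identifies $\A_m^{\mathcal{B}}$ with the Ces\`aro limit --- is correct.
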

\begin{proof}
\vspace*{-2mm}
See Appendix \ref{proof of the proposition}.
\end{proof}
We note that it is proved that there exists a Banach limit with $\mathcal{B}C=\mathcal{B}$ \cite[Theorem 4]{SS10}.  The admissible or semi-stable condition holds in many cases, for example, in our illustrative example (Section \ref{sec:example}).

Now, we derive an estimation formula of the above metric from finite time-series data. To this end, we first need the following lemma:
\begin{lemma}
\label{le:dmT}
Let $D_1=\left(\bm{f}_1, h_1, (v_{1,l})_{l=1}^N\right), D_2=\left(\bm{f}_2, h_2, (v_{2,l})_{l=1}^N\right)\in {\TSD}\left(\mathbb{C}^N, \hilb_{\rm ob}\right)$. Then we have the following formula:
\begin{align*}
&\knl_m^T(D_1, D_2)\\
&=\sum_{t_1,\dots,t_m=0}^{T-1}\sum_{\substack{0<s_1<\dots\\ <s_m\le N}} \left\langle L_{h_i}K_{\bm{f}_i}^{t_1}v_{i, s_1}\wedge\dots\wedge L_{h_i}K_{\bm{f}_i}^{t_m}v_{i, s_m},\,L_{h_j}K_{\bm{f}_j}^{t_1}v_{j, s_1}\wedge\dots\wedge L_{h_j}K_{\bm{f}_j}^{t_m}v_{j, s_m}\right\rangle
\end{align*}
\end{lemma}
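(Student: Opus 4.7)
The plan is to reduce the LHS to a sum of concrete Gram determinants using an orthonormal basis of $\mathbb{C}^N$, and then reassemble those determinants as inner products of wedge products via the standard identity $\langle u_1 \wedge \cdots \wedge u_m, w_1 \wedge \cdots \wedge w_m \rangle = \det\bigl((\langle u_a, w_b \rangle)_{a,b}\bigr)$ on $\bigwedge^m \hilb_{\rm ob}$. Concretely, I would write $A_r := L_{h_1} K_{\bm f_1}^r \mathscr{I}_1$ and $B_r := L_{h_2} K_{\bm f_2}^r \mathscr{I}_2$, both bounded operators $\mathbb{C}^N \to \hilb_{\rm ob}$. Since $\mathscr{I}_i = (v_{i,1},\dots,v_{i,N})$, one has $A_r e_l = L_{h_1} K_{\bm f_1}^r v_{1,l}$ on the standard basis, and similarly for $B_r$. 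With $C := \sum_{r=0}^{T-1} B_r^* A_r \colon \mathbb{C}^N \to \mathbb{C}^N$, the definition of $\knl_m^T$ reads $\knl_m^T(D_1, D_2) = \tr(\bigwedge^m C)$.

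Next I would apply the standard trace-of-wedge formula $\tr(\bigwedge^m C) = \sum_{s_1 < \cdots < s_m} \det\bigl((\langle C e_{s_a}, e_{s_b}\rangle)_{a,b}\bigr)$ with respect to the standard basis of $\mathbb{C}^N$, and substitute $\langle C e_{s_a}, e_{s_b}\rangle = \sum_r \langle A_r e_{s_a}, B_r e_{s_b}\rangle$. This produces a sum over strictly increasing tuples $s_1 < \cdots < s_m$ of determinants whose $(a,b)$-entries are themselves sums indexed by $r$. To finish, I would expand each such determinant by multilinearity in its rows: this pulls the sum over $r$ outside and introduces one independent time-index $t_a \in \{0, \dots, T-1\}$ per row. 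The resulting Gram matrix of $\hilb_{\rm ob}$-inner products is then identified, via the wedge-product identity recalled above, with the inner product of the simple $m$-vectors $L_{h_1} K_{\bm f_1}^{t_1} v_{1, s_1} \wedge \cdots \wedge L_{h_1} K_{\bm f_1}^{t_m} v_{1, s_m}$ and $L_{h_2} K_{\bm f_2}^{t_1} v_{2, s_1} \wedge \cdots \wedge L_{h_2} K_{\bm f_2}^{t_m} v_{2, s_m}$ in $\bigwedge^m \hilb_{\rm ob}$, after which summing over $s_1 < \cdots < s_m$ and $t_1, \dots, t_m$ yields the stated formula.

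The main obstacle is the index bookkeeping in the multilinearity step: pulling $\sum_r$ out of the determinant introduces per-row indices $t_a$, and these must be matched correctly against the per-position indices $t_\ell$ attached to the wedge factors on the RHS, as well as to the two slots of the $\hilb_{\rm ob}$-inner product (one slot being associated with $D_1$, the other with $D_2$). Apart from this bookkeeping the argument is purely multilinear-algebraic; the Hilbert--Schmidt hypothesis built into $\TSD(\mathbb{C}^N, \hilb_{\rm ob})$ guarantees that every trace, determinant, and wedge-product inner product appearing in the derivation is well-defined, so no analytic or convergence argument is required.
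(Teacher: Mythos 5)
Your opening moves --- writing $\knl_m^T(D_1,D_2)=\tr(\bigwedge^m C)$ with $C=\sum_{r=0}^{T-1}B_r^*A_r$, and invoking $\tr(\bigwedge^m C)=\sum_{s_1<\dots<s_m}\det(\langle Ce_{s_a},e_{s_b}\rangle)_{a,b}$ --- are fine and match how the paper's own (very terse) proof begins. The genuine gap is precisely at the step you yourself flagged as ``bookkeeping.'' Expanding $\det\bigl(\sum_r\langle A_re_{s_a},B_re_{s_b}\rangle\bigr)_{a,b}$ by multilinearity in the rows does pull out one time index per row, but that index $t_a$ then appears in \emph{both} arguments of every entry of row $a$: you obtain
\[
\sum_{t_1,\dots,t_m=0}^{T-1}\det\bigl(\langle A_{t_a}e_{s_a},\,B_{t_a}e_{s_b}\rangle\bigr)_{a,b},
\]
whereas the Gram matrix whose determinant equals the wedge inner product on the right-hand side of the lemma has $(a,b)$ entry $\langle A_{t_a}e_{s_a},\,B_{t_b}e_{s_b}\rangle$, with the time index of the second argument governed by the \emph{column}. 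These are different matrices, their determinants differ, and the discrepancy does not cancel after summing over $(t_1,\dots,t_m)$, so the identification you invoke is simply not available.

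To see that this is not repairable by more careful indexing, note that your argument uses only the operator structure (cf.\ Remark~\ref{rem: generalization}), so if it worked it would prove the identity for arbitrary bounded operators $A_r,B_r\colon\mathbb{C}^N\to\hilb_{\rm ob}$. It does not: take $N=m=T=2$, $\hilb_{\rm ob}=\mathbb{C}^2$, $A_0=B_0=I$, and $A_1=B_1=E$ where $E$ is the matrix with a one in the $(1,2)$ entry and zeros elsewhere. Then $C=I+E^*E=\mathrm{diag}(1,2)$, so $\tr(\bigwedge^2C)=\det C=2$, while $\sum_{t_1,t_2=0}^{1}\langle A_{t_1}e_1\wedge A_{t_2}e_2,\,B_{t_1}e_1\wedge B_{t_2}e_2\rangle=1$, the only nonvanishing term being $(t_1,t_2)=(0,0)$ (the term $(0,1)$ produces $e_1\wedge e_1=0$ and the terms with $t_1=1$ involve $Ae_1=0$). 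The source of the mismatch is that $\tr(\bigwedge^m C)=\langle\bigwedge^m S(D_1),\bigwedge^m S(D_2)\rangle_{HS}$ naturally lives in $\bigwedge^m(\hilb_{\rm ob}^{\oplus T})$, where wedge factors carry a time label and cross terms between tuples $(t_a)$ and $(t_a')$ that agree only as multisets survive; projecting down to $\bigwedge^m\hilb_{\rm ob}$ as the lemma's right-hand side does loses this information. So the statement is not a formal consequence of the definition alone, and any correct proof must isolate the extra structure (e.g.\ a simultaneous eigenvector expansion as in the admissibility hypothesis, under which all the relevant operators are diagonal in a common basis and the two expressions do coincide) that makes the cross terms disappear. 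The paper's appendix proof glosses over the same point --- the cited identity $(A_1\otimes\dots\otimes A_m)^*=A_1^*\otimes\dots\otimes A_m^*$ does not interact correctly with the antisymmetrizer when the $A_i$ differ --- so you should not take it as a model for filling the gap.
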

\begin{proof}
\vspace*{-2mm}
See Appendix \ref{app:proof_lemma_dmT}.
\end{proof}
For $i=1,2$, 
we consider $N$ time-series sequences $\{y_{i,0}^l, y_{i,1}^l, y_{i,2}^l, \dots\}\subset \mathcal{H}_{\rm ob}$ in an observable space ($l=1,\dots, N$), which are supposed to be generated from dynamical system $\bm{f}_i$ on  $\mathcal{M}_i\subset\mathcal{X}$ and observed via $h_i$. That is, we consider, for $i=1,2$, $t\in \mathbb{T}$, and $l=1,\dots, N$,
\begin{equation}
\label{eq:ds}
\bm{x}_{i,t+1}^l=\bm{f}_i\left(\bm{x}_{i,t}^l\right),~
y_{i,t}^l=h_i(\bm{x}_{i,t}^l),~\bm{x}_{i,0}^l\in\mathcal{M}_i.
\end{equation}
Assume for $i=1,2$, the triple  $D_i=\left(\bm{f}_i, h_i, \left(\phi(\bm{x}_{i,0}^l)\right)_{l=1}^N\right)$ is in ${\TSD}\left(\mathbb{C}^N, \hilb_{\rm ob}\right)$. 
Then, from Lemma~\ref{le:dmT}, we have
\if0
\begin{align}\label{eq:estimation}
&\knl_m^T((K_{h_i}, K_{\bm{f}_i}, b_i), (K_{h_j}, K_{\bm{f}_j}, b_j))\\&=\sum_{t_1,\dots,t_m=0}^{T-1}\sum_{\substack{0<s_1<\dots\\ <s_m\le N}}
\det\left(
\begin{array}{ccc}
\left<y^{s_1}_{i,t_1},\,y^{s_1}_{j, t_1}\right>_{\mathcal{H}''}&\cdots&\left<y^{s_1}_{i, t_1},\,y^{s_m}_{j, t_m}\right>_{\mathcal{H}''}\\
\vdots&\ddots&\vdots\\
\left<y^{s_m}_{i, t_m},\,y^{s_1}_{j, t_1}\right>_{\mathcal{H}''}&\cdots&\left<y^{s_m}_{i, t_m},\,y^{s_m}_{j, t_m}\right>_{\mathcal{H}''}
\end{array}
\right)\nonumber.
\end{align}
\fi
\begin{align}
\label{eq:estimation}
&\knl_m^T(D_1, D_2) \notag\\
&=\sum_{t_1,\dots,t_m=0}^{T-1}\sum_{\substack{0<s_1<\dots\\ <s_m\le N}} \left\langle L_{h_i}\phi\left(\bm{x}^{s_1}_{i,t_1}\right)\wedge \dots \wedge L_{h_i}\phi\left(\bm{x}^{s_m}_{i,t_m}\right),L_{h_j}\phi\left(\bm{x}^{s_1}_{j, t_1}\right)\wedge \dots \wedge L_{h_j}\phi\left(\bm{x}^{s_m}_{j, t_m}\right) \right\rangle \notag\\
&=\sum_{t_1,\dots,t_m=0}^{T-1}\sum_{0<s_1<\cdots<s_
m\le N}
\det\left(
\begin{array}{ccc}
\left<y^{s_1}_{i,t_1},\,y^{s_1}_{j, t_1}\right>_{\hilb_{\rm ob}}&\cdots&\left<y^{s_1}_{i, t_1},\,y^{s_m}_{j, t_m}\right>_{\hilb_{\rm ob}}\\
\vdots&\ddots&\vdots\\
\left<y^{s_m}_{i, t_m},\,y^{s_1}_{j, t_1}\right>_{\hilb_{\rm ob}}&\cdots&\left<y^{s_m}_{i, t_m},\,y^{s_m}_{j, t_m}\right>_{\hilb_{\rm ob}}
\end{array}
\right).
\end{align}
In the case of $\hilb_{\rm ob}=\mathcal{H}_k$ and $h_i=\phi|_{\mathcal{M}_i}$, we see that $\left<y^{s_a}_{i,t_b},\,y^{s_c}_{j, t_d}\right>_{\hilb_{\rm ob}}=k(x^{s_a}_{i,t_b},\,x^{s_c}_{j, t_d})$. Therefore, by Proposition \ref{sufficient condition for convergence}, if $D_i$'s are semi-stable or admissible, then we can compute an convergent estimator of $\A^\mathcal{B}_m$ through $A_m^T$ just by evaluating the values of kernel functions.

\section{Relation to Existing Metrics on Dynamical Systems}
\label{sec:relation}

\if0

Here, we give two examples.  At the first example, we consider an angle between two ARMA models. We compute the angle of Definition \ref{def: angle} for a finite dimensional RKHS coming from ARMA models.  Our result covers  DeCock's and DeMoor's one in \cite{DeCock-DeMoor02}.    At the second example, we consider the Szeg\"o kernel on the unit disk in $\mathbb{C}$. We compare dynamical systems defined by rotations, namely  multiplications of complex numbers whose absolute values are smaller or equal to 1.

\fi

In this section, we show that our metric covers the existing metrics defined in the previous works \cite{Martin00, DeCock-DeMoor02, VSV07}. That is, we describe the relation to the metric via subspace angles and Martin's metric in Subsection~\ref{ssec:martin} and the one to the Binet-Chaucy metric for dynamical systems in Subsection~\ref{ssec:binet} as the special cases of our metric.

\subsection{Relation to metric via principal angles and Martin's metric}
\label{ssec:martin}

In this subsection, we show that in a certain situation, our metric reconstruct the metric (Definition 2 in \cite{Martin00}) for the ARMA models introduced by Martin \cite{Martin00} and DeCock-DeMoor \cite{DeCock-DeMoor02}. 
Moreover, our formula generalizes their formula to the non-stable case, that is, we do not need to assume the eigenvalues are strictly smaller than 1.

We here consider two linear dynamical systems. That is, in Eqs.~\eqref{eq:ds}, let $\bm{f}_i\colon \mathbb{R}^q\rightarrow\mathbb{R}^q$ and $h_i\colon\mathbb{R}^q\rightarrow\mathbb{R}^r$ be linear maps for $i=1, 2$ with $l=1$, which we respectively denote by $\bm{A}_i$ and $\bm{C}_i$. 
Then, De~Cock~and~De~Moor propose to compare these two models by using the subspace angles as
\begin{equation}
\label{eq:dist_pa}
d((\bm{A}_1,\bm{C}_1),(\bm{A}_2,\bm{C}_2)) = -\log \prod_{i=1}^m \cos^2\theta_i,
\end{equation}
where $\theta_i$ is the $i$-th subspace angle between the column spaces of the extended observability matrices $\mathcal{O}_i:=[\bm{C}_i^\top~(\bm{C}_i\bm{A}_i)^\top~(\bm{C}_i\bm{A}_i^2)^\top~\cdots]$ for $i=1,2$. Meanwhile, Martin define a distance on AR models via cepstrum coefficients, which is later shown to be equivalent to the distance \eqref{eq:dist_pa} \cite{DeCock-DeMoor02}.

Now, we regard $\mathcal{X}=\mathbb{R}^q$. The positive definite kernel here is the usual inner product of $\mathbb{R}^q$ and the associated RKHS is canonically isomorphic to $\mathbb{C}^q$. Let $\mathcal{H}_{\rm in}=\mathbb{C}^{q}$ and $\mathcal{H}_{\rm ob}=\mathbb{C}^r$. 
Note that for $i=1,2$, $\bm{D}_i=(\bm{A}_i, \bm{C}_i, \bm{I}_q)\in \TSD\left(\mathbb{C}^q, \mathbb{C}^r\right)$, and for any linear maps $\bm{f}: \mathbb{R}^q\rightarrow\mathbb{R}^q$ and $\bm{h}:\mathbb{R}^q\rightarrow\mathbb{R}^N$, $K_{\bm{f}}=\bm{f}$ and $L_{\bm{h}}=\bm{h}$.
 
Then we have the following theorem:
\begin{proposition}
\label{thm: convergence of general DCDM}
The sequence $\mathbf{A}_q\left(\bm{D}_1, \bm{D}_2\right)$ converges. In the case that the systems are observable and stable, this limit $\A_q\left(\bm{D}_1, \bm{D}_2\right)$ is essentially equal to \eqref{eq:dist_pa}.
\end{proposition}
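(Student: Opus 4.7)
The plan is to exploit the fact that with $N=m=q$ and $\I_i=\bm{I}_q$, the $q$-th exterior power in the definition of $\knl_q^T$ collapses to a single determinant. Since $\hilb_{k,\mathcal{M}}\cong\mathbb{C}^q$, the composite $L_{h_i}K_{\bm{f}_i}^r\I_i$ is just the $r\times q$ matrix $\bm{C}_i\bm{A}_i^r$, and therefore
\[
\sum_{r=0}^{T-1}\bigl(L_{h_2}K_{\bm{f}_2}^r\I_2\bigr)^{*}L_{h_1}K_{\bm{f}_1}^r\I_1=\sum_{r=0}^{T-1}(\bm{C}_2\bm{A}_2^r)^{*}\bm{C}_1\bm{A}_1^r=\bigl(\mathcal{O}_2^{(T)}\bigr)^{*}\mathcal{O}_1^{(T)},
\]
where $\mathcal{O}_i^{(T)}$ is the truncation of $\mathcal{O}_i$ to its first $T$ block rows. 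Because $\bigwedge^q$ applied to an endomorphism of the $q$-dimensional space $\mathbb{C}^q$ acts on the one-dimensional $\bigwedge^q\mathbb{C}^q$ as multiplication by the determinant, and its trace is then just that scalar,
\[
\knl_q^T(\bm{D}_1,\bm{D}_2)=\det\bigl((\mathcal{O}_2^{(T)})^{*}\mathcal{O}_1^{(T)}\bigr).
\]

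Next, I would substitute this identity into the definition of $A_q^T$. Under observability of $(\bm{A}_i,\bm{C}_i)$, $\mathcal{O}_i^{(T)}$ has full column rank $q$ for all $T$ sufficiently large, so the Gram determinants are strictly positive, the $\epsilon$-regularization disappears, and
\[
A_q^T(\bm{D}_1,\bm{D}_2)=\frac{\bigl|\det\bigl((\mathcal{O}_2^{(T)})^{*}\mathcal{O}_1^{(T)}\bigr)\bigr|^2}{\det\bigl((\mathcal{O}_1^{(T)})^{*}\mathcal{O}_1^{(T)}\bigr)\,\det\bigl((\mathcal{O}_2^{(T)})^{*}\mathcal{O}_2^{(T)}\bigr)}=\prod_{i=1}^{q}\cos^2\theta_i^{(T)},
\]
the last equality being the classical Binet--Cauchy identity for the product of cosines squared of principal angles $\theta_i^{(T)}$ between the column spaces of $\mathcal{O}_1^{(T)}$ and $\mathcal{O}_2^{(T)}$.

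In the stable case (spectral radii of $\bm{A}_i$ strictly less than $1$), $\|\bm{A}_i^r\|$ decays geometrically, so every series $\sum_r(\bm{C}_j\bm{A}_j^r)^{*}\bm{C}_i\bm{A}_i^r$ converges absolutely to $\mathcal{O}_j^{*}\mathcal{O}_i$, and therefore $\mathbf{A}_q(\bm{D}_1,\bm{D}_2)$ converges to $\prod_{i=1}^{q}\cos^2\theta_i$, where $\theta_i$ are the principal angles between the column spaces of the full extended observability matrices $\mathcal{O}_1,\mathcal{O}_2$. Applying $-\log$ turns this limit into exactly the DeCock--DeMoor distance \eqref{eq:dist_pa}, so the two pseudo-metrics coincide up to the strictly monotone reparametrization $x\mapsto-\log x$; this is the sense in which they are ``essentially equal''.

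The technical crux, and what I expect to be the main obstacle, is unconditional convergence of $\mathbf{A}_q(\bm{D}_1,\bm{D}_2)$ without the stability hypothesis. The strategy I would use is two-step. First, any linear endomorphism of the finite-dimensional $\mathbb{C}^q$ makes $\bm{D}_i$ admissible --- the canonical basis is a finite linear combination of (generalized) eigenvectors of $\bm{A}_i$, so the coefficient sequences in the definition are trivially absolutely summable --- and Proposition~\ref{sufficient condition for convergence} therefore already yields Ces\`aro convergence. Second, to upgrade this to ordinary convergence I would expand the numerator $\det\bigl((\mathcal{O}_2^{(T)})^{*}\mathcal{O}_1^{(T)}\bigr)$ and the two normalizing Gram determinants in the joint (generalized) eigenbases of $\bm{A}_1$ and $\bm{A}_2$, and verify that the leading exponential-in-$T$ factors cancel in the ratio $A_q^T$, leaving a limit determined by the dominant eigen-structure; Jordan blocks contribute only polynomial-in-$T$ corrections that cancel in the same way. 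This eigenvalue-cancellation bookkeeping is the only delicate part of the argument.
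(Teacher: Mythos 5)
Your proposal is correct and takes essentially the same route as the paper: the paper likewise reduces $\knl_q^T$ to the Gram determinant of the truncated observability matrices, treats the stable observable case via the principal-angle identity, and proves unconditional convergence by passing to Jordan form and introducing explicit diagonal normalizers $D_{i,T}'$ (entries $\alpha^{-T}$ for unstable eigenvalues, $T^{-1/2}$ for unimodular ones, and extra polynomial factors for nontrivial Jordan blocks) whose contributions cancel exactly in the ratio $A_q^T$ --- precisely the eigenvalue-cancellation bookkeeping you describe but defer. The one point to watch when completing that step is that the normalized self-Gram determinants must be shown to have nonzero limits (in the distinct-eigenvalue case they are Cauchy-matrix determinants), so that the limit of the ratio is indeed the ratio of the limits.
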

\begin{proof}
\vspace*{-2mm}
See Appendix \ref{proof of theorem 4.1}.
\end{proof}
Therefore, we can define a metric between linear dynamical systems with $(\bm{A}_1, \bm{C}_1)$ and $(\bm{A}_2, \bm{C}_2)$ by $\A_q\left(\bm{D}_1, \bm{D}_2\right)$.


Moreover, the value $\A_q\left(\bm{D}_1, \bm{D}_2\right)$ captures an important characteristic of behavior of dynamical systems. We here illustrate it in the situation where the state space models come from AR models. We will see that $\A_q\left(\bm{D}_1, \bm{D}_2\right)$ has a sensitive behavior on the unit circle, and gives a reasonable generalization of Martin's metric \cite{Martin00} to the non-stable case.

For $i=1,2$, we consider an observable AR model:
\begin{align}
(M_i) \quad \bm{y}_t=a_{i,1}\bm{y}_{t-1}+\dots+a_{i,q}\bm{y}_{t-q},
\end{align}
where $a_{i,k}\in \mathbb{R}$ for $k\in\{1,\cdots,q\}$. Let $\bm{C}_i=(1,0,\dots,0)\in\mathbb{C}^{1\times q}$, and let $\bm{A}_i$ be the companion matrix for $M_i$.
And, let $\gamma_{i,1},\dots,\gamma_{i,q}$ be the roots of the equation $y^q-a_{i,1}y^{q-1}-\dots-a_{i,q}=0$. For simplicity, we assume these roots are distinct complex numbers.
%
%
Then, we define
\begin{align*}
&P_i:=\left\{\gamma_{i,n}~\Big|~|\gamma_{i,n}|>1\right\}
,~ 
Q_i:=\left\{\gamma_{i,n}~\Big|~|\gamma_{i,n}|=1\right\}
, \text{and}~~
R_i:=\left\{\gamma_{i,n}~\Big|~|\gamma_{i,n}|<1\right\}
.
\end{align*}
As a result, if $|P_1|=|P_2|$, $|R_1|=|R_2|$, and $Q_1=Q_2$, we have
\begin{equation}
\label{A_N}
\begin{split}
&\A_q\left(\bm{D}_1, \bm{D}_2\right) \\
&=
\frac{\displaystyle\prod_{\alpha,\beta\in P_1}\left(1-\alpha\uesen{\beta}\right)\cdot \prod_{\alpha,\beta\in P_2}\left(1-\alpha\uesen{\beta}\right)}{\displaystyle\prod_{\alpha\in P_1, \beta\in P_2}\left|1-\alpha\beta\right|^2}
\cdot
\frac{\displaystyle\prod_{\alpha,\beta\in R_1}\left(1-\alpha\uesen{\beta}\right)\cdot \prod_{\alpha,\beta\in R_2}\left(1-\alpha\uesen{\beta}\right)}{\displaystyle\prod_{\alpha\in R_1, \beta\in R_2}\left|1-\alpha\beta\right|^2},
\end{split}
\end{equation}
and, otherwise, $\A_q\left(\bm{D}_1, \bm{D}_2\right)=0$. The detail of the derivation is in Appendix~\ref{app:derivation_Aq}.

Through this metric, we can observe a kind of ``phase transition'' of linear dynamical systems on the unit circle, and the metric  has sensitive behavior when eigen values on it. We note that in the case of $P_i=Q_i=\emptyset$, the formula (\ref{A_N}) is essentially equivalent to the distance~\eqref{eq:dist_pa} (see Theorem 4 in \cite{DeCock-DeMoor02}).

\subsection{Relation to the Binet-Cauchy metric on dynamical systems}
\label{ssec:binet}

Here, we discuss the relation between our metric and the Binet-Cauchy kernels on dynamical systems defined by Vishwanathan~et~al.\@ \cite[Section 5]{VSV07}. Let us consider two linear dynamical systems as in Subsection~\ref{ssec:martin}.
In \cite[Section 5]{VSV07}, they give two kernels to measure the distance between two systems (for simplicity, here we disregard the expectations over variables); the trace kernels $k_{\rm tr}$ and the determinant kernels $k_{\rm det}$, which are respectively defined by
\begin{align*}
k_{\mathrm{tr}}(({\bm x}_{1,0},\bm{f}_1,\bm{h}_1),({\bm x}_{2,0},\bm{f}_2,\bm{h}_2)) &= \sum_{t=1}^\infty e^{-\lambda t}\bm{y}_{1,t}^\top\bm{y}_{2,t},\\
k_{\mathrm{det}}(({\bm x}_{1,0},\bm{f}_1,\bm{h}_1),({\bm x}_{2,0},\bm{f}_2,\bm{h}_2)) &= \det\left(\sum_{t=1}^\infty e^{-\lambda t}\bm{y}_{1,t}\bm{y}_{2,t}^\top\right),
\end{align*}
where $\lambda>0$ is a positive number satisfying $e^{-\lambda}||\bm{f}_1||||\bm{f}_2||$$<$$1$ to make the limits convergent. And $\bm{x}_{1,0}$ and $\bm{x}_{2,0}$ are initial state vectors, which affect the kernel values through the evolutions of the observation sequences. Vishwanathan~et~al.\@ discussed a way of removing the effect of initial values by taking expectations over those by assuming some distributions.

These kernels can be described in terms of our notation as follows (see also Remark \ref{rem: generalization}). That is, let us regard $\mathcal{H}_k=\mathbb{C}^q$.  For $i=1,2$, we define $D_i:=(e^{-\lambda}\bm{f}_i, \bm{h}_i, \bm{x}_{i,0})\in \TSD(\mathbb{C}, \mathbb{C}^r)$, and $D_i^*:=(e^{-\lambda}\bm{f}_i^*, \bm{x}_{i,0}^*, \bm{h}_i^*)\in \TSD(\mathbb{C}^r, \mathbb{C})$.
Then these are described as
\begin{align*}
k_{\rm tr}\left((\bm{x}_{1,0}, \bm{f}_1, \bm{h}_1),(\bm{x}_{2,0}, \bm{f}_2, \bm{h}_2)\right)&=\lim_{T\rightarrow\infty}\knl_1^T\left(D_1, D_2\right),\\
k_{\rm det}\left((x_{1,0}, \bm{f}_1, \bm{h}_1),(x_{2,0}, \bm{f}_2, \bm{h}_2)\right)&=\lim_{T\rightarrow\infty}\knl_r^T\left(D_1^*, D_2^*\right).
\end{align*}
Note that, introducing the exponential discounting $e^{-\lambda}$ is a way to construct a mathematically valid kernel to compare dynamical systems. However, in a certain situation, this method does not work effectively.  In fact, if we consider three dynamical systems on $\mathbb{R}$: fix a small positive number $\epsilon>0$ and let $f_1(x)=(1+\epsilon)x$, $f_2(x)=x$, and $f_3(x)=(1-\epsilon)x$ be linear dynamical systems. We choose $1\in\mathbb{R}$ as the initial value. Here, it would be natural to regard these dynamical systems are "different" each other even with almost zero $\epsilon$. However, if we compute the kernel defined via the exponential discounting, these dynamical systems are judged to be similar or almost the same.  
Instead of introducing such an exponential discounting, our idea to construct a mathematically valid kernel is considering the limit of the ratio of kernels defined via finite series of the orbits of dynamical systems.  As a consequence, we do not need to introduce the exponential discounting. It enables ones to deal with a wide range of dynamical systems, and capture the difference of the systems effectively.  In fact, in the above example, our kernel judges these dynamical systems are completely different, i.e., the value of $A_1$ for each pair among them takes zero.
\section{Empirical Evaluations}
\label{sec:result}

\if0

The above estimation algorithm from finite data provides an angle between two dynamical systems theoretically based on the Koopman operator and observable functions but practically only computed by using the kernel function and finite data, therefore, can be considered as a classification procedure for multiple nonlinear dynamical systems based on the dynamical structure.
In this section, to illustrate how our algorithm works as a direct application, we consider the classification tasks using two examples: rotation dynamics on the unit disk in the complex plane and real-world time-series data. 
The former illustrative experiment validated our algorithm by comparing the analytic solution, the numerical computation and the practically beneficial implementation.
The latter evaluated our algorithm in a practical sense.

\fi

We empirically illustrate how our metric works with synthetic data of the rotation dynamics on the unit disk in a complex plane in Subsection~\ref{sec:example}, and then evaluate the discriminate performance of our metric with real-world time-series data in Subsection~\ref{ssec:realdata}.

\subsection{Illustrative example: Rotation on the unit disk}
\label{sec:example}

We use the rotation dynamics on the unit disk in the complex plane since we can compute the analytic solution of our metric for this dynamics.
Here, we regard $\mathcal {X}=\mathbb{D}:=\{z\in \mathbb{C}~|~|z|<1\}$ and let $k(z,w):=(1-z\overline{w})^{-1}$ be the Szeg\"o kernel for $z,w\in\mathbb{D}$.  The corresponding RKHS $\mathcal{H}_k$ is the space of holomorphic functions $f$ on $\mathbb{D}$ with the Taylor expansion $f(z)=\sum_{n\ge0}a_n(f)z^n$ such that $\sum_{n\ge0}|a_n(f)|^2<\infty$.  For $f,g\in\mathcal{H}_k$, the inner product is defined by $\langle f,\,g\rangle:=\sum_{n\ge0}a_n(f)\overline{a_n(g)}$.  Let $\mathcal{H}_{\rm in}=\mathbb{C}$ and $\mathcal{H}_{\rm ob}=\mathcal{H}_k$. 

For $\alpha\in\mathbb{C}$ with $|\alpha|\le1$, let $R_\alpha: \mathbb{D}\rightarrow\mathbb{D}; z\mapsto \alpha z$.  We denote by $K_\alpha$ the Koopman operator for RKHS defined by $R_\alpha$. We note that since $K_\alpha$ is the adjoint of the composition operator defined by $R_\alpha$, by Littlewood subordination theorem, $K_\alpha$ is bounded.
Now, we define $\delta_z: \mathcal{H}_k\rightarrow \mathbb{C};f\mapsto f(z)$ and $\delta_{z,w}:\mathcal{H}_k\rightarrow\mathbb{C}^2; f\mapsto (f(z),\,f(w))$.  Then we define $D_{\alpha,z}^1:=(R_\alpha, \phi, \delta_z^*)\in {\TSD
}(\mathbb{C}, \hilb_k)$ and $D_{\alpha, z}^2:=(R_\alpha, \phi, \delta_{z, \alpha z}^*)\in{\TSD
}(\mathbb{C}^2, \hilb_k)$.

By direct computation, we have the following formula (see Appendix \ref{analytic solution of A1} and Appendix \ref{analytic solution of A2} for the derivation): For $\A_1$, we have
{\small\begin{equation}
\label{eq:rotation_A1}
\A_1\left(D_{\alpha, z}^1, D_{\beta, w}^1\right)
=
\begin{cases}
\frac{(1-|z|^2)(1-|w|^2)}{|1-(z\overline{w})^q|^2}&\text{$|\alpha|=|\beta|=1$ and $\alpha\overline{\beta}=e^{2\pi ip/q}$ with $(p,q)=1$},\\
(1-|z|^2)(1-|w|^2)&\text{$|\alpha|=|\beta|=1$ and $\alpha\overline{\beta}=e^{2\pi i\gamma}$ with $\gamma\notin\ratnum$},\\
1-|z|^2&|\alpha|=1, |\beta|<1,\\
1-|w|^2&|\alpha|<1, |\beta|=1,\\
1&|\alpha|,|\beta|<1.
\end{cases}
\end{equation}}
\hspace*{-1mm}For $A_2$ we have
{\small\begin{equation}
\label{eq:rotation_A2}
\A_2\left(D_{\alpha, z}^2, D_{\beta, w}^2\right)
=
\begin{cases}
O(|zw|^{2\mu(\alpha,\beta)})&|\alpha|=|\beta|=1\\
0&|\alpha|=1, |\beta|<1,\\
0&|\alpha|<1, |\beta|=1,\\
\frac{(1-|\alpha|^2)(1-|\beta|^2)}{|1-\alpha\overline{\beta}|^2}\cdot\frac{|1+\alpha\overline{\beta}|^2}{(1+|\alpha|^2)(1+|\beta|^2)}+O(|z\overline{w}|^2)
&|\alpha|,|\beta|<1.
\end{cases}
\end{equation}}
\hspace*{-1mm}where, $\mu(\alpha, \beta)$ is a positive scalar value described in Appendix \ref{analytic solution of A2}. 
\if0
for $\alpha=e^{2\pi i a}$ and $\beta=e^{2\pi i b}$, the integer $\mu(\alpha, \beta)$ is defined by 
{\small\begin{equation*}
\mu(\alpha,\beta)
=
\begin{cases}
q&\text{$a\notin\ratnum$ or $b\notin\ratnum$ with $a-b=p/q$ with $(p,q)=1$},\\
+\infty&\text{$a\notin\ratnum$ or $b\notin\ratnum$ with $a-b\notin\ratnum$},\\
\min\left\{p+q~\big|~p,q\ge0,\,ap-bq\in\ratint \right\}&a,b\in\ratnum.
\end{cases}
\end{equation*}}
\hspace*{-1mm}
\fi
From the above, we see that $A_1$ depends on the initial values of $z$ and $w$, but $A_2$ could independently discriminate the dynamics.

\begin{figure}
\begin{minipage}[t][][b]{0.400\textwidth}
\centering
\hspace*{-4mm}
\includegraphics[width=1.1\columnwidth]{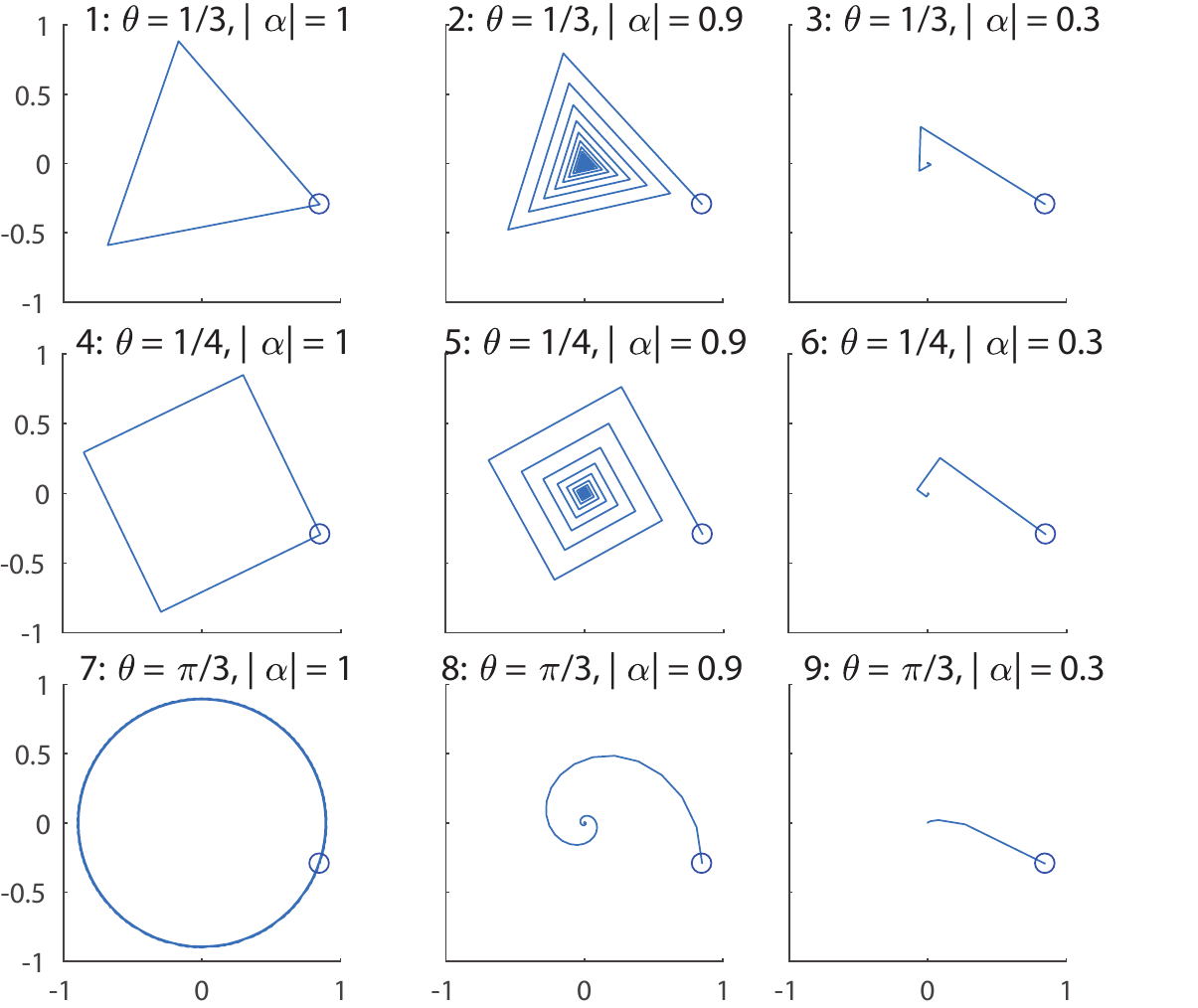}
\vspace{-1mm} 
\captionsetup[subfigure]{width=1\columnwidth}
\vspace*{-2mm}
\caption{Orbits of rotation dynamics by multiplying  $\alpha=|\alpha|e^{2\pi i \theta}$ on the unit disk with the same initial values. }
\label{fig:Hardy}
\end{minipage}
~~~
\newcommand{\scaleofimagewidthi}{0.23}
\begin{minipage}[t][][b]{0.53\textwidth}
  \begin{tabular}{c|ccc}
  $z_0$&$\mathscr{A}_1$&$A_1^{10}$&$A_1^{100}$\\
  \hline
\\[-5pt]
  \raisebox{9mm}{$0.9$}&
  \begin{minipage}[t]{\scaleofimagewidthi\hsize}
    \includegraphics[ width=\columnwidth]{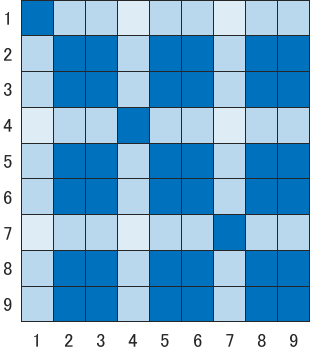}
    \vspace{-5.5truemm}
    \subcaption{}\label{tinf09}
  \end{minipage}&
  \begin{minipage}[t]{\scaleofimagewidthi\hsize}
    \includegraphics[ width=\columnwidth]{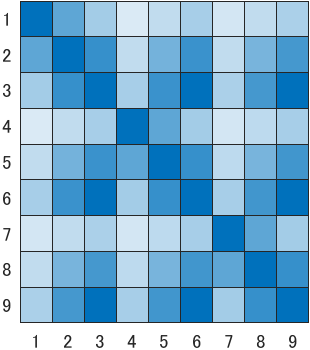}
    \vspace{-5.5truemm}
    \subcaption{}\label{t1009}
  \end{minipage}&
  \begin{minipage}[t]{\scaleofimagewidthi\hsize}
    \includegraphics[ width=\columnwidth]{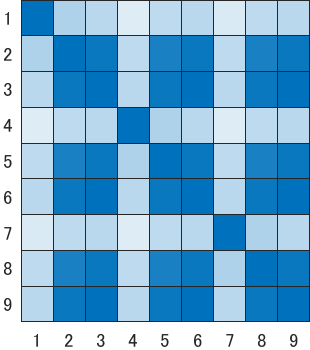}
    \vspace{-5.5truemm}
    \subcaption{}\label{t10009}
  \end{minipage}\\
  \hline
  \\[-5pt]
  \raisebox{9mm}{$0.3$}&
  \begin{minipage}[t]{\scaleofimagewidthi\hsize}
    \includegraphics[ width=\columnwidth]{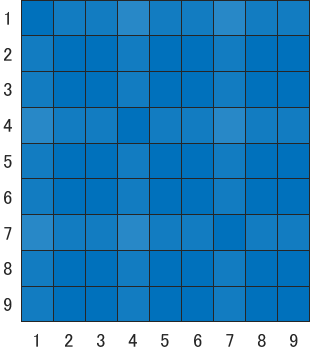}
    \vspace{-5.5truemm}
    \subcaption{}\label{tinf03}
  \end{minipage}&
  \begin{minipage}[t]{\scaleofimagewidthi\hsize}
    \includegraphics[ width=\columnwidth]{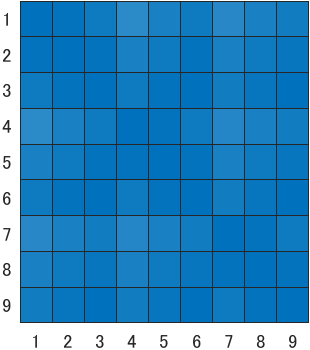}
    \vspace{-5.5truemm}
    \subcaption{}\label{t1003}
  \end{minipage}&
  \begin{minipage}[t]{\scaleofimagewidthi\hsize}
    \includegraphics[ width=\columnwidth]{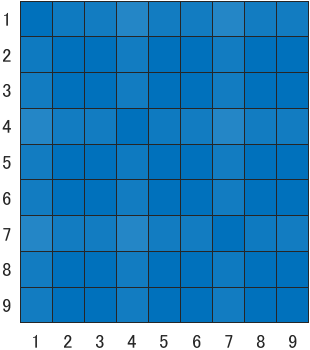}
    \vspace{-5.5truemm}
    \subcaption{}\label{t10003}
  \end{minipage}\\
  \end{tabular}
\vspace{-2mm} 
\caption{Comparison of empirical values (\ref{eq:estimation}) and theoretical values (\ref{eq:rotation_A1}) of the kernels $A_1^T$ and $\mathscr{A}_1$ of rotation dynamics with initial values $z_0$}
\label{fig:comp_emp_anly}
\end{minipage}\\

\newcommand{\scaleofimagewidthii}{0.14}
\begin{minipage}[t][][b]{\textwidth}
\centering
  \begin{tabular}{c|cc|cc|c}
  &\multicolumn{2}{c|}{Szeg\"o kernel}&\multicolumn{2}{c|}{Gaussian kernel}&KDMD\cite{Fujii17}\\
  \hline
  $z_0$&$A_1^{100}$&$A_2^{100}$&$A_1^{100}$&$A_2^{100}$&$A_{kkp}$\\
  \hline
&&&&&
\\[-5pt]
  \raisebox{11mm}{$0.9$}&
  \begin{minipage}[t]{\scaleofimagewidthii\hsize}
    \includegraphics[width=\columnwidth]{img/Fig4/szego_a1_09}
    \vspace{-5.5truemm}
    \subcaption{}\label{szego109}
  \end{minipage}&
  \begin{minipage}[t]{\scaleofimagewidthii\hsize}
    \includegraphics[ width=\columnwidth]{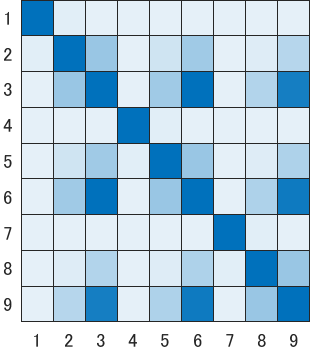}
    \vspace{-5.5truemm}
    \subcaption{}\label{szego209}
  \end{minipage}&
  \begin{minipage}[t]{\scaleofimagewidthii\hsize}
    \includegraphics[width=\columnwidth]{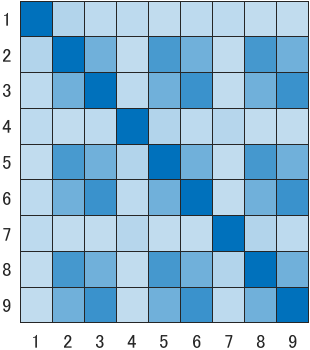}
    \vspace{-5.5truemm}
    \subcaption{}\label{gauss109}
  \end{minipage}&
  \begin{minipage}[t]{\scaleofimagewidthii\hsize}
    \includegraphics[ width=\columnwidth]{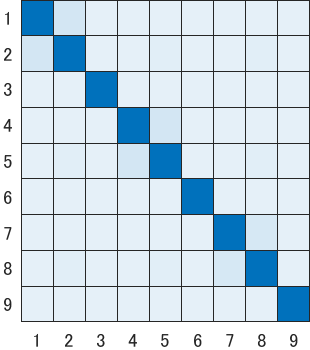}
    \vspace{-5.5truemm}
    \subcaption{}\label{gauss209}
  \end{minipage}&
    \begin{minipage}[t]{\scaleofimagewidthii\hsize}
    \includegraphics[width=\columnwidth]{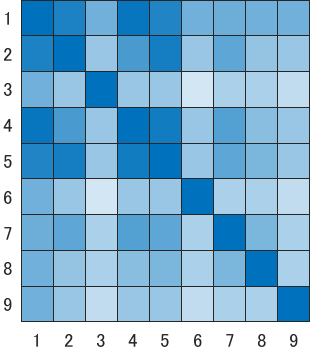}
    \vspace{-5.5truemm}
    \subcaption{}\label{kdmd09}
  \end{minipage}\\
  \hline
  &&&&&
  \\[-5pt]
  \raisebox{10mm}{$0.3$}&
  \begin{minipage}[t]{\scaleofimagewidthii\hsize}
    \includegraphics[width=\columnwidth]{img/Fig4/szego_a1_03}
    \vspace{-5.5truemm}
    \subcaption{}\label{szego103}
  \end{minipage}&
  \begin{minipage}[t]{\scaleofimagewidthii\hsize}
    \includegraphics[ width=\columnwidth]{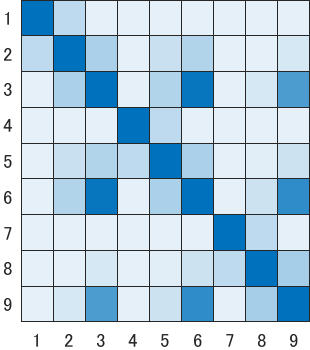}
    \vspace{-5.5truemm}
    \subcaption{}\label{szego203}
  \end{minipage}&
  \begin{minipage}[t]{\scaleofimagewidthii\hsize}
    \includegraphics[width=\columnwidth]{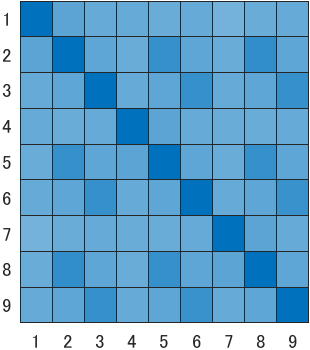}
    \vspace{-5.5truemm}
    \subcaption{}\label{gauss103}
  \end{minipage}&
  \begin{minipage}[t]{\scaleofimagewidthii\hsize}
    \includegraphics[ width=\columnwidth]{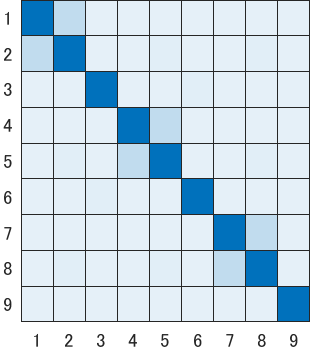}
    \vspace{-5.5truemm}
    \subcaption{}\label{gauss203}
  \end{minipage}&
    \begin{minipage}[t]{\scaleofimagewidthii\hsize}
    \includegraphics[width=\columnwidth]{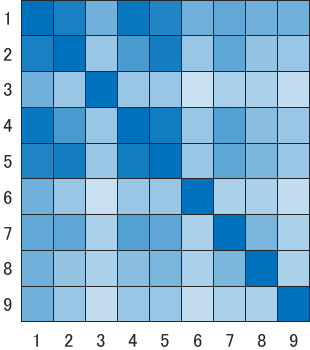}
    \vspace{-5.5truemm}
    \subcaption{}\label{kdmd03}
  \end{minipage}\\
  \end{tabular}
\vspace{-2mm} 
\caption{Discrimination results of various metrics for rotation dynamics with initial values $z_0$. Vertical and horizontal axes correspond to the dynamics in Figure \ref{fig:Hardy}. }
\label{fig:Kernel_Hardy}
\vspace*{-3mm}
\end{minipage}

\end{figure}

Next, we show empirical results with Eq.~\eqref{eq:estimation} from finite data for this example.\footnote{The Matlab code is available at \url{https://github.com/keisuke198619/metricNLDS}}
For $\A_1$, we consider $x_{\alpha, t}^1  = \alpha^t z_0$, where $\alpha = |\alpha|e^{2\pi i \theta} $.
And for $\A_2$, we consider 
$x_{\alpha, t}^1= \alpha^t z_0$ and $x_{\alpha, t}^2 = \alpha^{t+1} z_0 = \alpha^t z_1$.
The graphs in Figure~\ref{fig:Hardy} show the dynamics on the unit disk with $\theta = \{1/3, 1/4, \pi/3\}$ and $|\alpha| = \{1, 0.9, 0.3\}$. 
For simplicity, all of the initial values were set so that $|z_0| = 0.9$.

Figure~\ref{fig:Kernel_Hardy} shows the confusion matrices for the above dynamics to see the discriminative performances of the proposed metric using the Szeg\"o kernel (Figure~\ref{szego109}, \ref{szego209}, \ref{szego103}, and \ref{szego203}),  using radial basis function (Gaussian) kernel (Figure~\ref{gauss109}, \ref{gauss209}, \ref{gauss103}, and \ref{gauss203}), and the comparable previous metric (Figure~\ref{kdmd09} and \ref{kdmd03})~\cite{Fujii17}.
For the Gaussian kernel, the kernel width was set as the median of the distances from data. 
The last metric called Koopman spectral kernels~\cite{Fujii17} generalized the kernel defined by Vishwanathan et al. \cite{VSV07} to the nonlinear dynamical systems and outperformed the method.
Among the above kernels, we used Koopman kernel of principal angle ($A_{kkp}$) between the subspaces of the estimated Koopman mode, showing the best discriminative performance~\cite{Fujii17}.

The discriminative performance in $A_1$ when $T = 100$ shown in Figure~\ref{t10009} converged to the analytic solution when considering $T\rightarrow \infty$ in Figure~\ref{tinf09} compared with that when $T=10$ in Figure~\ref{t1009}.
As guessed from the theoretical results, although $A_1$ did not discriminate the difference between the dynamics converging to the origin while rotating and that converging linearly, $A_2$ in Figure~\ref{szego209} did. 
$A_2$ using the Gaussian kernel ($A_{g2}$) in Figure~\ref{gauss209} achieved almost perfect discrimination, whereas $A_1$ using Gaussian kernel ($A_{g1}$) in Figure~\ref{gauss109} and $A_{kkp}$ in Figure~\ref{kdmd09} did not.
Also, we examined the case of small initial values in Figure~\ref{szego103}-\ref{kdmd03} so that $|z_0| = 0.3$ for all the dynamics.
$A_2$ (Figure~\ref{szego203}, \ref{gauss203}) discriminated the two dynamics, whereas the remaining metrics did not (Figure~\ref{szego103}, \ref{gauss103}, and \ref{kdmd03}).

\subsection{Real-world time-series data}
\label{ssec:realdata}

In this section, we evaluated our algorithm for discrimination using dynamical properties in time-series datasets from various real-world domains.
We used the UCR time series classification archive as open-source real-world data \cite{Chen15}. 
It should be noted that our algorithm in this paper primarily target the deterministic dynamics; therefore, we selected the examples apparently with smaller noises and derived from some dynamics
(For random dynamical systems, see e.g., \cite{Mezic05,Williams15,Takeishi17a}). 
From the above viewpoints, we selected two Sony AIBO robot surface (sensor data), star light curve (sensor data), computers (device data) datasets.
We used $\mathbf{A}_m$ by Proposition \ref{sufficient condition for convergence} because we confirmed that the data satisfying the semi-stable condition in Definition \ref{semi-stable} using the approximation of $K_{\bm{f}}$ defined in \cite{Kawahara16}.  

We compared the discriminative performances by embedding of the distance matrices computed by the proposed metric and the conventional Koopman spectral kernel used above.
For clear visualization, we randomly selected 20 sequences for each label from validation data, because our algorithms do not learn any hyper-parameters using training data.
All of these data are one-dimensional time-series but for comparison, we used time-delay coordinates to create two-dimensional augmented time-series matrices.
Note that it would be difficult to apply the basic estimation methods of Koopman modes assuming high-dimensional data, such as DMD and its variants.
In addition, we evaluated the classification error using $k$-nearest neighbor classifier ($k = 3$) for simplicity.
We used 40 sequences for each label and computed averaged 10-fold cross-validation error (over 10 random trials).

\begin{figure}
\centering
\includegraphics[width=1\columnwidth]{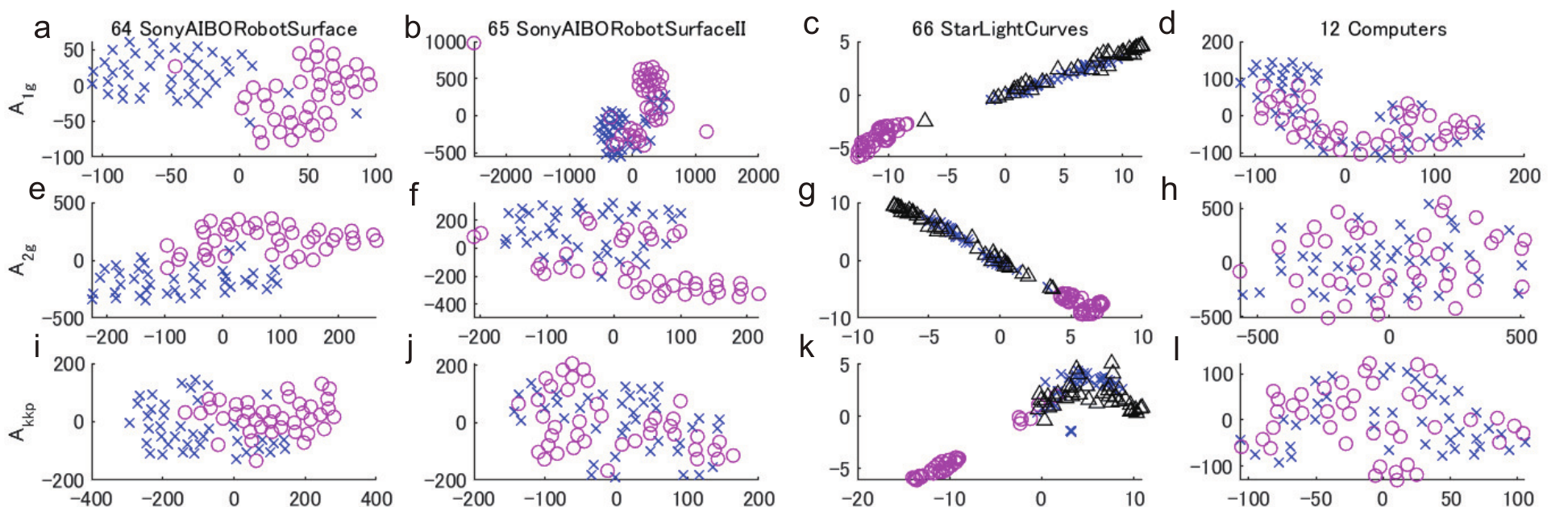} 
\vspace{-4mm}
\caption{Embeddings of four time series data using t-SNE for $A_{g1}$ (a-d), $A_{g2}$ (e-h), and $A_{kkp}$ (i-l). (a,e,i) Sony AIBO robot surface I and (b,f,j) II datasets. (c,g,k) Star light curve dataset. (d,h,l) Computers dataset. The markers x, o, and triangle represent the class 1, 2, and 3 in the datasets.}
\label{fig:TimeSeries}
\vspace{-3mm} 
\end{figure}

Figure \ref{fig:TimeSeries} shows examples of the embedding of the $A_{g1}$, $A_{g2}$, and $A_{kkp}$ using t-SNE \cite{Maaten08} for four time-series data.
In the Sony AIBO robot surface datasets, D in Figure \ref{fig:TimeSeries}a,b,e,f (classification error: 0.025,~0.038,~0.213, and 0.150) had better discriminative performance than $A_{kkp}$ in Figure \ref{fig:TimeSeries}i,j (0.100 and 0.275).
This tendency was also observed in the star light curve dataset in Figure \ref{fig:TimeSeries}c,g,k (0.150,~0.150, and 0.217), where one class (circle) was perfectly discriminated using $A_{g1}$ and $A_{g2}$ but the distinction in the remaining two class was less obvious.
In computers dataset, $A_{g2}$, and $A_{kkp}$ in Figure \ref{fig:TimeSeries}h,l (0.450 and 0.450) show slightly better discrimination than $A_{kkp}$ in Figure \ref{fig:TimeSeries}d (0.500).

\section{Conclusions}
\label{sec:concl}

In this paper, we developed a general metric for comparing nonlinear dynamical systems that is defined with Koopman operator in RKHSs. We described that our metric includes Martin's metric and Binet-Cauchy kernels for dynamical systems as its special cases. We also described the estimation of our metric from finite data. Finally, we empirically showed the effectiveness of our metric using an example of rotation dynamics in a unit disk in a complex plane and real-world time-series data.

Several perspectives to be further investigated related to this work would exist. For example, it would be interesting to see discriminate properties of the metric in more details with specific algorithms. Also, it would be important to develop models for prediction or dimensionality reduction for nonlinear time-series data based on mathematical schemes developed in this paper.

%


\bibliography{koopman_angle_nips18}
\bibliographystyle{plain}

\newpage
\setcounter{page}{1}
\appendix

\section*{Supplementary Document for\\``Metric on Nonlinear Dynamical Systems with Koopman Operators"}

In this supplementary material, we briefly explain the notion of the exterior product of Hilbert spaces  in Section~\ref{exterior product}, and then give the proofs of  Proposition~\ref{positive definiteness of d}, Proposition~\ref{distance ni naru}, Proposition~\ref{sufficient condition for convergence}, Lemma~\ref{le:dmT}, and Proposition~\ref{thm: convergence of general DCDM} in Section~\ref{proof of the proposition}, Section~\ref{proof of positive definiteness}, Section~\ref{proof distance}, Section~\ref{app:proof_lemma_dmT} and Section~\ref{proof of theorem 4.1}, respectively. And then, we describe the details of the derivations of Eq.~\eqref{A_N}, and the analytic solutions of $A_1$ in Eq.~\eqref{eq:rotation_A1} and $A_2$ in Eq.~\eqref{eq:rotation_A2} in Section~\ref{app:derivation_Aq}, Section~\ref{analytic solution of A1}, and Section~\ref{analytic solution of A2}, respectively.

\section{Exterior product of Hilbert spaces}
\label{exterior product}
Let $H$ be a Hilbert space with inner product $\langle\cdot,\cdot\rangle$.  Let $H^{\otimes m}$ be a $m$-tensor product as an abstract complex linear space.  Then for $x_1\otimes\dots x_m, y_1\otimes\dots y_m\in H^{\otimes m}$, 
\[\langle x_1\otimes\dots x_m, y_1\otimes\dots y_m\rangle_\otimes:=\prod_{i=1}^m\langle x_i, y_i\rangle\]
induces an inner product on $H^{\otimes m}$.  We denote by $\widehat{\otimes}^mH$ the completion via the norm induced by the inner product $\langle\cdot, \cdot\rangle_\otimes$.  

We define a linear operator $\mathscr{E}: \widehat{\otimes}^mH\rightarrow \widehat{\otimes}^mH$ by
\[\mathscr{E}(x_1\otimes x_m):=\sum_{\sigma\in\mathfrak{S}_m}{\rm sgn}(\sigma)x_{\sigma(1)}\otimes\cdots\otimes x_{\sigma(m)},\]
where $\mathfrak{S}_m$ is the $m$-th symmetric group, and ${\rm sgn}:\mathfrak{S}_m\rightarrow\{\pm 1\}$ is the sign homomorphism.
We define the {\em $m$-th exterior product} of $H$ by
\[\bigwedge^mH:=\mathscr{E}\left(\widehat{\otimes}^mH\right).\]
For $x_1,\dots, x_m\in H$, we also define 
\[x_1\wedge\dots x_m:=\mathscr{E}(x_1\otimes\dots \otimes x_m)\]
The inner product on $\bigwedge^mH$ is described as
\[\langle x_1\wedge\dots \wedge x_m, y_1\wedge\dots \wedge y_m\rangle_{\bigwedge^mH}=\det(\langle x_i, y_j\rangle)_{i,j=1,\dots,m}.\]
We note that there exists an isomorphism
\begin{align*}
\bigoplus_{r+s=m}\bigwedge^rH\widehat{\otimes}\bigwedge^sH'\cong\bigwedge^m(H\oplus H'); \sum_{r+s=m}x_r\otimes y_s\mapsto \sum_{r+s=m}x_r\wedge y_s.
\end{align*}

Let $L: H\rightarrow H'$ be a linear operator. Then $L$ induces a linear operator $\widehat{\otimes}^m L: \widehat{\otimes}^mH\rightarrow \widehat{\otimes}^mH'$ defined by $\widehat{\otimes}^mL(x_1\otimes \dots x_m):=Lx_1\otimes\dots\otimes Lx_m$. The operator  $\widehat{\otimes}^m L$ induces an operator on $\bigwedge^m H$, namely,$\widehat{\otimes}^m L\left(\bigwedge^m H\right)\subset\bigwedge^m H'$,   and we define
\[\bigwedge^m L:=\left.\widehat{\otimes}^m L\right|_{\bigwedge^m H}.\]

\section{Proof of Proposition \ref{positive definiteness of d}}
\label{proof of positive definiteness}
\begin{proof}
Let $S(D_i):=\left(L_{h_i}\mathscr{I}_i, \dots, L_{h_i}K_{\bm{f}_i}^{T-1}\mathscr{I}i\right)$, which is a linear operator from $\hilb_{\rm in}$ to $\hilb_{\rm ob}^T$. Since $\wedge^m S(D_2)^*S(D_1)=\left(\wedge^mS(D_2)\right)^*\left(\wedge^mS(D_1)\right)$, $\knl_m(D_1, D_2)$ is just a inner product of the Hilbert-Schmidt operators $\wedge^mS(D_1)$ nad $\wedge^mS(D_1)$, thus, we see that $\knl_m^T$ is a positive definite kernel.
\end{proof}

\section{Proof of Proposition \ref{distance ni naru}}
\label{proof distance}
\begin{proof}
Since $\A_m^\mathcal{B}$ is a positive definite kernel on ${\TSD(\hilb_{\rm in}, \hilb_{\rm ob})}$, there exists RKHS $\hilb_{\A_m^{\mathcal{B}}}$ with feature map $\tilde{\phi}:{\TSD(\hilb_{\rm in}, \hilb_{\rm ob})}\rightarrow \hilb_{\A_m^{\mathcal{B}}}$.  Therefore, the statement of the theorem follows that
\[\sqrt{1-\ms{A}_m^{\mathcal{B}}(D_1, D_2)}=2^{-1/2}\left|\left|\tilde{\phi}(D_1)-\tilde{\phi}(D_2)\right|\right|_{\hilb_{\A_m^{\mathcal{B}}}}\]
\end{proof}

\section{Proof of Proposition \ref{sufficient condition for convergence}}
\label{proof of the proposition}
In this section, 
we denote $v_r:=v_{1,r}$, and  $w_r:=v_{2,r}$.
%
%
%
Also, let $\bm{\varphi}_{ n_1,\dots,n_m}:=a_{1,n_1}L_{h_1}\varphi_{1,n_1}\wedge\dots\wedge a_{1,n_m}L_{h_1}\varphi_{1,n_m}$ and $\bm{\psi}_{n_1,\dots,n_m}:=a_{2,n_1}L_{h_2}\psi_{2,n_1}\wedge\dots\wedge a_{2,n_m}L_{h_1}\psi_{2,n_m}$.
Then we have 
\begin{align*}
&\left\langle L_{h_1}K_{\bm{f}_1}^{t_1}v_{s_1}\wedge\dots\wedge L_{h_1}K_{\bm{f}_1}^{t_m}v_{s_m},\,L_{h_2}K_{\bm{f}_2}^{t_1}w_
{s_1}\wedge\dots\wedge L_{h_2}K_{\bm{f}_2}^{t_m}w_{s_m}\right\rangle\\
&=\sum_{\substack{p_1,\dots,p_m=1\\q_1,\dots, q_m=1}}^\infty\lambda_{1,p_1}^{t_1}\overline{\lambda_{2,q_1}}^{t_1}\cdots\lambda_{1,p_m}^{t_m}\overline{\lambda_{2,q_m}}^{t_m}\left\langle \bm{\varphi}_{p_1,\dots,p_m},\,\bm{\psi}_{q_1,\dots,q_m}\right\rangle.
\end{align*}
Thus we have the following formulas:
\begin{align*}
&\knl_m^T(D_1, D_2)\\
&=\sum_{\substack{p_1,\dots,p_m=1\\q_1,\dots, q_m=1}}^\infty\frac{1-\lambda_{1,p_1}^T\overline{\lambda_{2,q_1}}^T}{1-\lambda_{1,p_1}\overline{\lambda_{2,q_1}}}\cdots\frac{1-\lambda_{1,p_m}^T\overline{\lambda_{2,q_m}}^T}{1-\lambda_{1,p_m}\overline{\lambda_{2,q_m}}}\left\langle \bm{\varphi}_{p_1,\dots,p_m},\,\bm{\psi}_{q_1,\dots,q_m}\right\rangle,\\
&\knl_m^T(D_1,D_1)\\
&=\sum_{\substack{p_1,\dots,p_m=1\\q_1,\dots, q_m=1}}^\infty\frac{1-\lambda_{1,p_1}^T\overline{\lambda_{1,q_1}}^T}{1-\lambda_{1,p_1}\overline{\lambda_{1,q_1}}}\cdots\frac{1-\lambda_{1,p_m}^T\overline{\lambda_{1,q_m}}^T}{1-\lambda_{1,p_m}\overline{\lambda_{1,q_m}}}\left\langle \bm{\varphi}_{p_1,\dots,p_m},\,\bm{\varphi}_{q_1,\dots,q_m}\right\rangle,\\
&\knl_m^T(D_2, D_2)\\
&=\sum_{\substack{p_1,\dots,p_m=1\\q_1,\dots, q_m=1}}^\infty\frac{1-\lambda_{2,p_1}^T\overline{\lambda_{2,q_1}}^T}{1-\lambda_{2,p_1}\overline{\lambda_{2,q_1}}}\cdots\frac{1-\lambda_{2,p_m}^T\overline{\lambda_{2,q_m}}^T}{1-\lambda_{2,p_m}\overline{\lambda_{2,q_m}}}\left\langle \bm{\psi}_{p_1,\dots,p_m},\,\bm{\psi}_{q_1,\dots,q_m}\right\rangle,
\end{align*}
Here, for any complex number $z$, if $z=1$, we regard $(1-z^T)/(1-z)$ as $T$. 

We may assume both $\knl^T_m(D_1, D_1)$ and $\knl^T_m(D_2, D_2)$ are grater than some positive constant not depending on $T$.

At first, we treat the case $D_1$ and $D_2$ are semi-stable. In this case, we see that $\knl_m^T(D_i, D_j)=c_{i,j}T^{n_{ij}}+o(T^{n_{ij}})$ for some some constant $c_{i,j}$ and non-negative integer $n_{ij}\ge0$. Moreover, we have $2n_{12}\le n_{11}, n_{22}$.  By combining this with that $\sum_{n_1,\dots,n_m}||\bm{\varphi}_{n_1,\dots,n_m}||$ and $\sum_{n_1,\dots,n_m}||\bm{\psi}_{n_1,\dots,n_m}||$ converge,  we see that $\mathbf{A}_m$ converges and the limit is equal to $\A_m^\mathcal{B}$ for any Banach limit $\mathcal{B}$.

Next, for large $N$, put 
\begin{align*}
K_{1,N}^T&:=\sum_{\substack{p_1,\dots,p_m=1\\q_1,\dots, q_m=1}}^N\frac{1-\lambda_{1,p_1}^T\overline{\lambda_{2,q_1}}^T}{1-\lambda_{1,p_1}\overline{\lambda_{2,q_1}}}\cdots\frac{1-\lambda_{1,p_m}^T\overline{\lambda_{2,q_m}}^T}{1-\lambda_{1,p_m}\overline{\lambda_{2,q_m}}}\left\langle \bm{\varphi}_{p_1,\dots,p_m},\,\bm{\psi}_{q_1,\dots,q_m}\right\rangle,\\
K_{2,N}^T&:=\sum_{\substack{p_1,\dots,p_m=1\\q_1,\dots, q_m=1}}^N\frac{1-\lambda_{1,p_1}^T\overline{\lambda_{1,q_1}}^T}{1-\lambda_{1,p_1}\overline{\lambda_{1,q_1}}}\cdots\frac{1-\lambda_{1,p_m}^T\overline{\lambda_{1,q_m}}^T}{1-\lambda_{1,p_m}\overline{\lambda_{1,q_m}}}\left\langle \bm{\varphi}_{p_1,\dots,p_m},\,\bm{\varphi}_{q_1,\dots,q_m}\right\rangle,\\
K_{3,N}^T&:=\sum_{\substack{p_1,\dots,p_m=1\\q_1,\dots, q_m=1}}^N\frac{1-\lambda_{2,p_1}^T\overline{\lambda_{2,q_1}}^T}{1-\lambda_{2,p_1}\overline{\lambda_{2,q_1}}}\cdots\frac{1-\lambda_{2,p_m}^T\overline{\lambda_{2,q_m}}^T}{1-\lambda_{2,p_m}\overline{\lambda_{2,q_m}}}\left\langle \bm{\psi}_{p_1,\dots,p_m},\,\bm{\psi}_{q_1,\dots,q_m}\right\rangle,\\
\mathcal{A}_N&:=\left(\frac{|K_{1,N}^T|^2}{K_{2,N}^TK_{3,N}^T}\right)_{T=1}^\infty\in\ell^\infty.\\
\end{align*}
Note that the denominator of $A_N$ is not zero for sufficiently large $N$.
Since $\mathcal{A}_N\rightarrow\mathbf{A}_m$ as $N\rightarrow\infty$, thus $C\mathcal{A}_N\rightarrow C\mathbf{A}_m$ and thus it suffices to show that $C\mathcal{A}_N$ converges for any sufficiently large $N$, but, the convergence of $C\mathcal{A}_N$ actually follows the Lemma \ref{convergence lemma} below.
\begin{lemma}
\label{convergence lemma}
We denote by ${\rm S}:=\left\{z\in\mathbb{C}~\big|~|z|=1\right\}$ the unit circle in $\mathbb{C}$.  Let $f:{\rm S}^m\times\mathbb{C}^n\rightarrow\mathbb{C}$ be a continuous function. Let $\bm{\zeta}\in{\rm S}^m$ and $\{\bm{x}_i\}_{i\ge0}\subset\mathbb{C}^n$ be a sequence convergent to zero.  Then the limit
\[\lim_{T\rightarrow\infty}\frac{1}{T}\sum_{t=1}^Tf(\bm{\zeta}^t, \bm{x}_t)\]
converges.
\end{lemma}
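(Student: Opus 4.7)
The plan is to decouple the two arguments of $f$: first reduce the problem to averaging $g(\boldsymbol{\zeta}^t)$ where $g(\boldsymbol{z}) := f(\boldsymbol{z}, 0)$, and then handle the pure torus average $\frac{1}{T}\sum_{t=1}^T g(\boldsymbol{\zeta}^t)$ by Stone--Weierstrass and a direct geometric-series computation on monomials. The convergence of $\{\boldsymbol{x}_t\}$ gives us a boundedness, while the continuity of $f$ on a compact set gives us uniform continuity, which is exactly what we need to absorb the $\boldsymbol{x}_t$-dependence into a vanishing error term.

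In the first step, note that since $\boldsymbol{x}_t \to 0$, the set $K := \overline{\{\boldsymbol{x}_t : t \geq 0\}} \cup \{0\}$ is a compact subset of $\mathbb{C}^n$, hence $f$ is uniformly continuous on the compact product $\mathrm{S}^m \times K$. Given $\varepsilon>0$, choose $N$ large enough that $|\boldsymbol{x}_t| < \delta$ for $t \geq N$, where $\delta$ is a modulus of uniform continuity for $\varepsilon$. Then
\[
\Bigl|\tfrac{1}{T}\sum_{t=1}^T \bigl(f(\boldsymbol{\zeta}^t,\boldsymbol{x}_t) - f(\boldsymbol{\zeta}^t,0)\bigr)\Bigr|
\leq \tfrac{N\,\|f\|_{\infty,\mathrm{S}^m\times K}\cdot 2}{T} + \tfrac{T-N}{T}\,\varepsilon,
\]
so this difference of Ces\`aro averages has limit supremum at most $\varepsilon$; letting $\varepsilon\to 0$ reduces the problem to showing convergence of $\frac{1}{T}\sum_{t=1}^T g(\boldsymbol{\zeta}^t)$ for the continuous function $g(\boldsymbol{z}) := f(\boldsymbol{z},0)$ on $\mathrm{S}^m$.

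In the second step, by the Stone--Weierstrass theorem, trigonometric polynomials (finite $\mathbb{C}$-linear combinations of characters $\chi_{\boldsymbol{k}}(\boldsymbol{z}) := z_1^{k_1}\cdots z_m^{k_m}$ with $\boldsymbol{k}\in\mathbb{Z}^m$) are uniformly dense in $C(\mathrm{S}^m)$. For a single character, writing $\boldsymbol{\zeta}^{\boldsymbol{k}} := \zeta_1^{k_1}\cdots\zeta_m^{k_m}\in\mathrm{S}$, we have
\[
\frac{1}{T}\sum_{t=1}^T \chi_{\boldsymbol{k}}(\boldsymbol{\zeta}^t)
= \frac{1}{T}\sum_{t=1}^T (\boldsymbol{\zeta}^{\boldsymbol{k}})^t,
\]
which equals $1$ whenever $\boldsymbol{\zeta}^{\boldsymbol{k}}=1$, and is otherwise bounded in absolute value by $\frac{2}{T\,|1-\boldsymbol{\zeta}^{\boldsymbol{k}}|}\to 0$ by the usual geometric-sum estimate. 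Hence the Ces\`aro averages converge for every trigonometric polynomial, and since $|\frac{1}{T}\sum_{t=1}^T h(\boldsymbol{\zeta}^t)| \leq \|h\|_\infty$ for any $h \in C(\mathrm{S}^m)$, a standard $3\varepsilon$-argument using uniform approximation transfers convergence to $g$, completing the proof.

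The main obstacle, and the only place where a nonstandard idea enters, is the first step: it is tempting to apply uniform continuity of $f$ directly, but $f$ is only continuous on the non-compact space $\mathrm{S}^m \times \mathbb{C}^n$. The key observation that unlocks the argument is that convergent sequences are bounded, so $K$ is compact; afterwards everything is a routine combination of Stone--Weierstrass and the elementary fact that nontrivial geometric series on $\mathrm{S}$ have bounded partial sums. I would not expect any additional subtlety, since no equidistribution or ergodic theorem is needed beyond the character-by-character computation above.
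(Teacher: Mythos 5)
Your proof is correct, and it rests on the same two ingredients as the paper's one-paragraph argument --- polynomial density plus the explicit geometric-sum computation for characters $w^t$ with $|w|=1$ --- but it organizes them differently. The paper applies Weierstrass approximation to the joint function on $\mathrm{S}^m\times\mathbb{C}^n$ and then inspects each monomial: a monomial involving only the torus variables gives $\zeta^t$ with $|\zeta|=1$ (Ces\`aro-convergent by the geometric sum), while one involving any $\bm{x}$-variable gives $\zeta^t\bm{y}_t$ with $\bm{y}_t\to0$, whose Ces\`aro average vanishes. You instead strip off the $\bm{x}_t$-dependence \emph{first}, via uniform continuity of $f$ on $\mathrm{S}^m\times K$ with $K$ the compact closure of the sequence, reducing to a pure torus average, and only then invoke Stone--Weierstrass on $C(\mathrm{S}^m)$. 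Your route buys two things the paper leaves implicit: it makes explicit the compact set on which the approximation is uniform (the stated domain $\mathrm{S}^m\times\mathbb{C}^n$ is not compact), and by working with the characters $\chi_{\bm{k}}$, $\bm{k}\in\mathbb{Z}^m$, you get a self-adjoint separating algebra, which is what the complex Stone--Weierstrass theorem actually requires (the paper's ``monomials $x_1^{r_1}\cdots x_{m+n}^{r_{m+n}}$'' would need conjugates adjoined to be dense). The paper's version is shorter and handles the mixed terms in one stroke; yours is more careful and self-contained. Either is acceptable.
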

\begin{proof}
By the Weierstrass' approximation theorem, we may assume $f$ is a $m+n$-variable monomial: $f(x_1,\dots,x_{m+n})=x_1^{r_1}\dots x_{m+n}^{r_{m+n}}$. Thus $f(\bm{\zeta}^t, \bm{x}_t)$ is regarded as $\zeta^t$ or $\zeta^t\bm{y}_t$ where $\zeta\in\mathbb{C}$ with $|\zeta|=1$ and $\{\bm{y}_t\}_{t\ge0}$ is a sequence convergent to zero.  In the both cases, we see that the limit in the lemma exists.
\end{proof}

\if0
\section{Derivation of estimation algorithm from finite data}
\label{Derivation of estimation algorithm}
In this section, we show the derivation of the angle for the Koopman operator in the case of finite $m$ and $T$. 
Then,  by Lemma \ref{explicit formula for d} we obtain
\begin{align*}
&\knl^T_m((L_{\bm{h}_i}, L_{\bm{f}_i},b_i), (L_{\bm{h}_j}, L_{\bm{f}_j},b_j))\\
&=\sum_{t_1,\dots,t_m=0}^{T-1}\sum_{\substack{0<s_1<\dots\\ <s_m\le N}} \left\langle L_{\bm{h}_i}\phi\left(x^{s_1}_{i,t_1}\right)\wedge \dots \wedge L_{\bm{h}_i}\phi\left(x^{s_m}_{i,t_m}\right),L_{\bm{h}_j}\phi\left(x^{s_1}_{j, t_1}\right)\wedge \dots \wedge L_{\bm{h}_j}\phi\left(x^{s_m}_{j, t_m}\right) \right\rangle\\
&=\sum_{t_1,\dots,t_m=0}^{T-1}\sum_{0<s_1<\dots<s_
m\le N}
\det\left(
\begin{array}{ccc}
\left<y^{s_1}_{i,t_1},\,y^{s_1}_{j, t_1}\right>_{\mathcal{H}''}&\cdots&\left<y^{s_1}_{i, t_1},\,y^{s_m}_{j, t_m}\right>_{\mathcal{H}''}\\
\vdots&\ddots&\vdots\\
\left<y^{s_m}_{i, t_m},\,y^{s_1}_{j, t_1}\right>_{\mathcal{H}''}&\cdots&\left<y^{s_m}_{i, t_m},\,y^{s_m}_{j, t_m}\right>_{\mathcal{H}''}
\end{array}
\right)
\end{align*}
\fi

\section{Proof of Lemma~\ref{le:dmT}}
\label{app:proof_lemma_dmT}

\if0
At first, we prove the following lemma:
\begin{lemma}
\label{explicit formula for d}
In the situaion of Section \ref{ssec:estimation}, we have the following formula:
\begin{align*}
&\knl^T_m((C,R,g), (D,S,h))\\
&=\sum_{t_1,\dots,t_m=0}^{T-1}\sum_{0<s_1<\dots<s_m\le N}\left\langle CR^{t_1}v_{s_1}\wedge\dots\wedge CR^{t_m}v_{s_m},\,DS^{t_1}w_
{s_1}\wedge\dots\wedge DS^{t_m}w_{s_m}\right\rangle. 
\end{align*}
\end{lemma}
\fi


We use the property of trace of a linear operator $A$ on $\mathbb{C}^N$: 
\begin{align*}
\tr(A\wedge \dots \wedge A) 
&= \sum_{0<s_1<\dots<s_m\le N} \langle A\bfb{e}_{s_1}\wedge \dots \wedge A\bfb{e}_{s_m}),\bfb{e}_{s_1}\wedge \dots \wedge \bfb{e}_{s_m}\rangle,\\
&=\sum_{0<s_1<\dots<s_m\le N} \langle \mathcal{E}(A\bfb{e}_{s_1}\otimes \dots \otimes A\bfb{e}_{s_m}),\mathcal{E}(\bfb{e}_{s_1}\otimes \dots \otimes \bfb{e}_{s_m})\rangle.
\end{align*}
Here, $\bfb{e}_{s_k}$ be $N$-length vectors whose $s_k$-th component is $1$ and the others are $0$, and
\begin{align*}
\mathcal{E}(v_1\otimes\dots v_N)&=:v_1\wedge\dots\wedge v_N=\frac{1}{N!}\sum_{\sigma\in\mathcal{S}_N}{\rm sgn}(\sigma)v_{\sigma(1)}\otimes\dots\otimes v_{\sigma(N)}
\end{align*}
where $\mathcal{S}_N$ is the symmeric group of degree $N$.  Thus, we have
\begin{align*}
&\knl^T_m((L_{h_i},K_{\bm{f}_i},X_i), (L_{h_j},K_{\bm{f}_j},X_j))\\
&=\sum_{t_1,\dots,t_m=0}^{T-1}\sum_{1\le s_1<\dots<s_m\le N}\\
&\hspace{10pt}\left\langle L_{h_i}K_{\bm{f}_i}^{t_1}X_i^{(s_1)}\wedge\dots\wedge L_{h_i}K_{\bm{f}_i}^{t_m}X_i^{(s_m)},\,L_{h_j}K_{\bm{f}_j}^{t_1}X_j^{(s_1)}\wedge\dots\wedge L_{h_j}K_{\bm{f}_j}^{t_m}X_j^{(s_m)}\right\rangle.
\end{align*}
Here, we use the property of Hermite transpose of wedge product: $(A_1 \otimes \dots \otimes A_m)^* = A_1^* \otimes \dots \otimes A_m^*$ for operators $A_1,\dots,A_m$.

\section{Proof of Proposition \ref{thm: convergence of general DCDM}}
\label{proof of theorem 4.1}
Let $V_i$ be a matrix making $\bm{A}_i$ the Jordan normal form in the following form:
\begin{align*}
V_i^{-1}\bm{A}_iV_i=
\left(
\begin{array}{cccc}
\widetilde{D}_i&\cdots&&0\\
&J_{i,n_{i,1}}&&\vdots\\
&&\ddots&\\
0&\cdots&&J_{i,n_{i,M_i'}}
\end{array}
\right).
\end{align*}
Here, all the $n_{i,k}>1$ and $\widetilde{D}_i:={\rm diag}(\alpha_{i,1},\dots,\alpha_{i,M_i})$ and $J_{i, n_{i,k}}:= 
\beta_{i,k}\bm{I}_{n_{i,k}} + N_{n_{i,k}-1}'$ where 
\begin{align*}
N_{r}':=\left(\begin{array}{cc}
\bm{o} & \bm{I}_{r}  \\
0 & \bm{o}^\top
\end{array}\right).
\end{align*}
We assume 
\begin{align*}
|\alpha_{i,1}|\ge\dots\ge|\alpha_{i,l_i}|>1=|\alpha_{i,l_i+1}|=\dots=|\alpha_{i,m_i}|>|\alpha_{i,m_i+1}|\ge\dots\ge|\alpha_{i,M_i}|,\\
|\beta_{i,1}|\ge\dots\ge|\beta_{i,l_i'}|>1=|\beta_{i,l_i'+1}|=\dots=|\beta_{i,m_i'}|>|\beta_{i,m_i'+1}|\ge\dots\ge|\beta_{i,M_i'}|
\end{align*}
Let 
\begin{align*}
N_i:=
\left(
\begin{array}{cccc}
\bm{0}&\cdots&&\bm{0}\\
\vdots&N_{n_{i,1}-1}'&&\vdots\\
&&\ddots&\\
\bm{0}&\cdots&&N_{n_{i,M_i'}-1}'
\end{array}
\right),
\end{align*}
be a nilpotent matrix and let $D_i:=V_i^{-1}\bm{A}_iV_i-N_i$ be a diagonal matrix.  Then direct computation shows that
\begin{align*}
&\knl_q^T\left((L_{\bm{C}_i}, K_{\bm{A}_i},\bm{I}_q),\,(L_{\bm{C}_j},K_{\bm{A}_j},\bm{I}_q)\right)\\
&=\det V_i^{-1}\cdot\overline{\det V_j}^{-1}\det\left(\sum_{a,b=0}^q\sum_{r=0}^{T-1}{}_rC_a\cdot{}_rC_b\cdot
D_j^{*r-b}N_j^{*b}WN_i^aD_i^{r-a}\right)
\end{align*}
where $W=V_j^*\bm{C}_j^*\bm{C}_iV_i$. Put
\[B_{i,j,T}:=\sum_{a,b=0}^q\sum_{r=0}^{T-1}{}_rC_a\cdot{}_rC_b\cdot
D_j^{*r-b}N_j^{*b}WN_i^aD_i^{r-a}.\]
For $T>0$, we define
\begin{align*}
D_{i,T}':={\rm diag}\Bigg(&\alpha_{i,1}^{-T},\dots,\alpha_{i,l_i}^{-T},\underset{l_i+1}{T^{-1/2}},\dots,\underset{m_i}{T^{-1/2}},1,\dots,\underset{M_i}{1}\\
&,\beta_{i,1}^{-T},T^{-1}\beta_{i,1}^{-T},\dots,T^{-n_{i,1}+1}\beta_{i,1}^{-T},\dots,\beta_{i,l_i'}^{-T},\dots,T^{-n_{i,l_i'}+1}\beta_{i,l_i'}^{-T}\\
&, {T^{-1/2}},\dots,T^{1/2-n_{i,l_i+1}},\dots,T^{1/2-n_{i,m_i'}},1,\dots,1 \Bigg).
\end{align*}
Then we see that $\lim_{T\rightarrow\infty}D_{j,T}'^*B_{i,j,T}D_{i,T}'$ exists. Therefore, since
\[A_q^T\left(\bm{D}_1,\bm{D}_2\right)=\frac{\left|\det\left(D_{2,T}'^*B_{1,2,T}D_{1,T}\right)\right|^2}{\det\left(D_{1,T}'^*B_{1,1,T}D_{1,T}\right)\det\left(D_{2,T}'^*B_{2,2,T}D_{2,T}\right)},\]
the limit of $\mathbf{A}_q$ exists.

If the systems are stable and observable, it is the direct consequce of the definition of principal angles (see the formula (1) in \cite{DeCock-DeMoor02}).

\section{Derivation of Eq.~\eqref{A_N}}
\label{app:derivation_Aq}

Let \begin{align*}
&P_i
 =\left\{\gamma_{i,1},\dots,\gamma_{i,l_i}\right\}
,~ 
Q_i
 =\left\{\gamma_{i,l_i+1},\dots,\gamma_{i,m_i}\right\}
, \text{and}~~
R_i
 =\left\{\gamma_{i,m_i+1},\dots,\gamma_{i,N_i}\right\}
.
\end{align*}

Define $D_{i,T}':={\rm diag}\Big((\gamma_{i,1})^{-T},\dots,(\gamma_{i,l_i})^{-T},\underset{l_i+1}{\sqrt{T}^{-1}},\dots,\underset{m_i}{\sqrt{T}^{-1}},1,\dots,1\Big)$. Then, for $i=1,2$, we have
\begin{equation}
\label{shiki 1}
\begin{split}
&\lim_{T\rightarrow\infty}\knl_q^T\left(\bm{D}_1, \bm{D}_2\right)\cdot|\det{D_{i,T}'}|^2\cdot|\det V_i|^{-2}\\
&=(-1)^{\# P_i}\det\left(\frac{1}{1-\alpha\overline{\beta}}\right)_{\alpha,\beta\in P_i}\det\left(\frac{1}{1-\alpha\overline{\beta}}\right)_{\alpha,\beta\in R_i}.
\end{split}
\end{equation}
On the other hand, 
\begin{align} 
&\label{shiki 2}\lim_{T\rightarrow\infty}\left|\knl_q^T\left(\bm{D}_1, \bm{D}_2\right)\cdot{\det D_{1,T}'}\cdot\overline{\det D_{2,T}'}\right|\cdot\left|\det V_1\cdot\overline{\det V_2}\right|^{-1}\\
&=
\begin{cases}
\displaystyle \left|\det\left(\frac{1}{1-\alpha\overline{\beta}}\right)_{\substack{\alpha\in P_1,\\\beta\in P_2}}\right|\cdot\left|\det\left(\frac{1}{1-\alpha\overline{\beta}}\right)_{\substack{\alpha\in R_1,\\\beta\in R_2}}\right| &\text{ if $|P_1|=|P_2|$, $|R_1|=|R_2|$, $Q_1=Q_2$},\\
0&\text{ otherwise}\notag.
\end{cases}
\end{align}
Here, we give a sketch of the proof of Eqs.~\eqref{shiki 1}~and~\eqref{shiki 2}. Since both are proved in a similar way, we only show Eq.~\eqref{shiki 1} in the case of $i=1$.

\begin{proof}[Proof of (\ref{shiki 1}) in the case of $i=1$]
Recall
\begin{align*}
\knl_N^T\left(\bm{D}_1, \bm{D}_2\right)&=\det\left(\sum_{r=0}^{T-1}(V_1^{-1})^{*}D_1^{*r+1}WD_1^{r+1}V_1^{-1}\right)\\
&=\left|\det D_1V_1^{-1}\right|^2\cdot\det\left(\sum_{r=1}^{T}\left(\overline{\gamma_{1,s}}\gamma_{1,t}\right)^r\right)_{s,t=1,\dots N}
\end{align*}
and put
\[C_T:=\left(\sum_{r=0}^{T-1}\left(\overline{\gamma_{i,s}}\gamma_{i,t}\right)^r\right)_{s,t=1,\dots N},\]
where $W\in\mathbb{R}^{q\times q}$ whose components are all 1.  The matrix $D_{i,T}'^*C_TD_{i,T}'$ is described as the following matrix with nine sections:
\[
D_{1,T}'^*C_TD_{1,T}'
=
\left(
\begin{array}{ccc}
(P_1P_1)&(P_1Q_1)&(P_1R_1)\\
(Q_1P_1)&(Q_1Q_1)&(Q_1R_1)\\
(R_1P_1)&(R_1Q_1)&(R_1R_1)
\end{array}
\right)
\]
where each section has an explicit description, for example,
\begin{align*}
(P_1P_1)&=\left(\frac{\overline{\gamma_{1,s}}^{-T}\gamma_{1,t}^{-T}-1}
{1-\overline{\gamma_{1,s}}\gamma_{1,t}}\right)_{s,t=1,\dots,l_1},\\
(P_1Q_1)&=\left(\frac{\overline{\gamma_{1,s}}^{-T}-\gamma_{1,t}^T}{\sqrt{T}(1-\overline{\gamma_{1,s}}\gamma_{1,t})}\right)_{\substack{s=1,\dots,l_1\\ t=1,\dots,m_1}}.
\end{align*}
Thus we see that
\begin{align*}
\lim_{T\rightarrow\infty}(P_1P_1)&=\left(\frac{-1}{1-\alpha\overline{\beta}}\right)_{\alpha,\beta\in P_1}\\
\lim_{T\rightarrow\infty}(Q_1Q_1)&=\bm{I}_{m_1}\\
\lim_{T\rightarrow\infty}(R_1R_1)&=\left(\frac{1}{1-\alpha\overline{\beta}}\right)_{\alpha,\beta\in R_1}\\
\lim_{T\rightarrow\infty}(P_1Q_1)&=\lim_{T\rightarrow\infty}(P_1R_1)=\lim_{T\rightarrow\infty}(Q_1R_1)=0
\end{align*}
Therefore, we have
\begin{align*}
&\lim_{T\rightarrow\infty}\knl_N^T\left(\bm{D}_1, \bm{D}_2\right)\cdot|\det{D_{1,T}'}|^2\cdot|\det D_1V_1^{-1}|^{2}\\
&=(-1)^{\# P_1}\det\left(\frac{1}{1-\alpha\overline{\beta}}\right)_{\alpha,\beta\in P_1}\det\left(\frac{1}{1-\alpha\overline{\beta}}\right)_{\alpha,\beta\in R_1}.
\end{align*}
\end{proof}

Also, for distinct complex numbers $x_1,\dots, x_m, y_1,\dots, y_m$, the determinant of the Cauchy matrix $\det\left((x_i-y_j)^{-1}\right)_{i,j=1,\dots,m}$ is equal to
\[\frac{\displaystyle\prod_{i<j}(x_i-x_j)(y_j-y_i)}{\prod_{i,j}(x_i-y_j)}.\]
Combining it with
\begin{align*}
\det\left(\frac{1}{1-\alpha\overline{\beta}}\right)_{\substack{\alpha\in P_i,\\\beta\in P_j}}&=\det\left((\alpha^{-1}-\overline{\beta})^{-1}\right)_{\substack{\alpha\in P_i,\\\beta\in P_j}}\prod_{\alpha\in P_i}\alpha^{-1}
\end{align*}
and the similar formula for $\det\left((1-\alpha\overline{\beta})^{-1}\right)_{\substack{\alpha\in R_i,\\\beta\in R_j}}$, if $|P_1|=|P_2|$, $|R_1|=|R_2|$ and $Q_1=Q_2$, we have Eq.~\eqref{A_N}. Otherwise, $\A_q(\bm{D}_1, \bm{D}_2)=0$.


\section{Analytic solution of $\A_1$ (Eq.~\eqref{eq:rotation_A1}) using Szeg\"o kernel}
\label{analytic solution of A1}

In this appendix, we show the derivation of
\begin{align*}
\lim_{T\rightarrow\infty}\frac{1}{T}\knl_1^T(D_{\alpha, z}^1, D_{\beta, w}^1)
&=\lim_{T\rightarrow\infty}\frac{1}{T}\sum_{t=0}^\infty k(x^1(t), x^2(t))\\
&=\lim_{T\rightarrow\infty}\frac{1}{T}\sum_{t=0}^\infty\frac{1}{1-(\alpha\uesen{\beta})^t z\uesen{w}}\\&
=\begin{cases}
\frac{1}{1-(z\uesen{w})^q}&\text{$|\alpha|=|\beta|=1$\text{~and~}$\alpha\uesen{\beta}=e^{2\pi ip/q}$},\\
1&\text{otherwise},
\end{cases}
\end{align*}
where $p, q$ is relatively prime integers and let $q=+\infty$ when $\alpha\uesen{\beta}$ rotates an irrational angle. 

Here it suffices to consider the case of $|\alpha|=|\beta|=1$ and $\alpha\uesen{\beta}$ rotates a rational and irrational angles.
Now, we set $\gamma = \alpha\uesen{\beta}$ and $T' = z\uesen{w}$.
First, we consider the rational angle case.
Then we will show the derivation of 
\begin{align}\label{eq:DiskRational}
\lim_{T\rightarrow\infty}\frac{1}{T}\sum_{t=0}^T\frac{1}{1-\gamma^t T'} =
\frac{1}{1-T'^q}.
\end{align}

First, we will show the following proposition:
\begin{proposition}
Assume $\gamma = e^{2\pi ip/q}$, where $p, q$ is relatively prime integers and $T' $ is a constant complex value. Then, we have 
\begin{align}\label{eq:PropDisk}
\sum_{t=0}^{q-1}\frac{1}{1-\gamma^t T'} = \frac{q}{1-T'^q}.
\end{align}
\end{proposition}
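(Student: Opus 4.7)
The plan is to prove the identity by exploiting the standard orthogonality relation for roots of unity, then extending by analytic continuation. The key structural observation is that since $p$ and $q$ are coprime, the map $t \mapsto \gamma^t = e^{2\pi i pt/q}$ for $t = 0, 1, \ldots, q-1$ is a bijection onto the set $\mu_q := \{\zeta \in \mathbb{C} : \zeta^q = 1\}$ of $q$-th roots of unity. So the left-hand side can be rewritten as
\[
\sum_{t=0}^{q-1} \frac{1}{1 - \gamma^t T'} = \sum_{\zeta \in \mu_q} \frac{1}{1 - \zeta T'}.
\]

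The main step is to evaluate this symmetrized sum. I would first establish it for $|T'| < 1$, where geometric series expansions converge absolutely. Expanding each summand as
\[
\frac{1}{1 - \zeta T'} = \sum_{n=0}^{\infty} \zeta^n T'^n,
\]
and interchanging the (absolutely convergent) sums, we obtain
\[
\sum_{\zeta \in \mu_q} \frac{1}{1 - \zeta T'} = \sum_{n=0}^{\infty} T'^n \Bigl(\sum_{\zeta \in \mu_q} \zeta^n\Bigr).
\]
The inner sum is the standard orthogonality relation for roots of unity: it equals $q$ when $q \mid n$ and $0$ otherwise. Thus the double sum collapses to $\sum_{k=0}^{\infty} q\, T'^{qk} = \frac{q}{1-T'^q}$, which is exactly \eqref{eq:PropDisk}.

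Finally, to lift the restriction $|T'| < 1$, observe that both sides of \eqref{eq:PropDisk} are rational functions of $T'$ that are holomorphic on $\mathbb{C} \setminus \mu_q$ (the poles of the left-hand side occur precisely at $T' = \gamma^{-t}$, which together with the bijection argument exhaust $\mu_q$). Since they agree on the open disk $|T'| < 1$, they agree on their common domain by the identity theorem. I do not anticipate a real obstacle here; the only care needed is the bookkeeping that ensures the bijection between $\{\gamma^t\}_{t=0}^{q-1}$ and $\mu_q$, which is immediate from $\gcd(p,q)=1$.
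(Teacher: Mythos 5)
Your proof is correct, but it takes a genuinely different route from the paper's. The paper works purely algebraically with rational functions: it uses the factorization $1-T'^q=\prod_{t=0}^{q-1}(1-\gamma^t T')$ to posit a partial-fraction decomposition $\frac{q}{1-T'^q}=\sum_{t=0}^{q-1}\frac{a_t}{1-\gamma^t T'}$ and then computes each coefficient by a residue-type limit, $a_s=\lim_{T'\to\gamma^{-s}}(1-\gamma^s T')\frac{q}{1-T'^q}=1$, so the identity is a finite algebraic computation with no convergence issues. You instead reduce the left side to a sum over all of $\mu_q$ (using $\gcd(p,q)=1$ for the bijection $t\mapsto\gamma^t$), expand geometrically for $|T'|<1$, invoke the orthogonality relation $\sum_{\zeta\in\mu_q}\zeta^n=q\cdot[\,q\mid n\,]$ to collapse the double sum to $\sum_k qT'^{qk}$, and then extend to all $T'\notin\mu_q$ by the identity theorem (or simply by noting two rational functions agreeing on an open set coincide). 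Your version makes the mechanism more transparent --- the identity is exactly the $q$-fold decimation of the geometric series --- and sidesteps having to justify the form of the partial-fraction expansion; the paper's version stays entirely within finite algebra and needs no series or continuation argument. Both are complete; the only bookkeeping you rightly flag, the bijection onto $\mu_q$, is indeed immediate from coprimality.
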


\begin{proof}
First, we remember the following fact:
\begin{align}
\frac{q}{1-T'^q} =\sum_{t=0}^{q-1}\frac{a_t}{1-\gamma^t T'},
\end{align}
where $a_t$ is a scalar coefficient for $t = \{0,\ldots,q-1\}$, which is calculated below.
Here, we use the property: $1-T'^{q-1} = \prod_{t=0}^{q-1} 1-\gamma^t T'$ when $\gamma = e^{2\pi ip/q}$ (i.e., $\gamma^q = 1$).
Then, for deriving the following solution
\begin{align}\label{eq:LemDisk}
\sum_{t=0}^{q-1}\frac{a_t}{1-\gamma^t T'}=\sum_{t=0}^{q-1}\frac{1}{1-\gamma^t T'},
\end{align}
we consider the limit on $T'\rightarrow \gamma^{-s}$ for $s = \{0,\ldots,q-1\}$ as follows: 
\begin{align*}
\begin{split}
\lim_{T'\rightarrow \gamma^{-s}}(1-\gamma^s T')\sum_{t=0}^{q-1}\frac{1}{1-\gamma^t T'} 
 = \lim_{T'\rightarrow \gamma^{-s}}(1-\gamma^s T')\sum_{t\neq s}^{q-1}\frac{a_t}{1-\gamma^t T'} + a_s  = a_s.
\end{split}
\end{align*}
Thus, it is enough to calculate $a_s$, which is calculated as follows:
\begin{align*}
a_s=
\lim_{T'\rightarrow r^{-s}}(1-r^s T')\frac{q}{1-T'^q} 
 = \left(\gamma^s q\right) \lim_{T'\rightarrow r^{-s}}\left(\frac{T'^q-(\gamma^{-s})^q}{T'-\gamma^{-s}} \right)^{-1}
 = \gamma^s q \left(q (\gamma^{-s})^{q-1} \right)^{-1} = 1,
\end{align*}
which gives Eq. (\ref{eq:LemDisk}). Therefore we obtain Eq. (\ref{eq:PropDisk}).
\end{proof}

Next, we consider the limit on the time $T$.
Consider $T = qb_T+c$, where $q,b_T,c$ are non-negative integers, $c < q$ and $b_T$ is a variable that changes with $T$, then we have
\begin{align}
\begin{split}
\lim_{T\rightarrow\infty}\frac{1}{T}\sum_{t=0}^T\frac{1}{1-\gamma^t T'} 
& = \lim_{T\rightarrow\infty}\frac{1}{T}\left(\sum_{t=0}^{c}\frac{1}{1-\gamma^t T'} + \sum_{t=0}^{qb_T-1}\frac{1}{1-\gamma^t T'}\right)\\
& = \lim_{T\rightarrow\infty}\frac{b_T}{T}\sum_{t=0}^{q-1}\frac{1}{1-\gamma^t T'}
= \frac{1}{1-T'^q},
\end{split}
\end{align}
which implies Eq. (\ref{eq:DiskRational}).

In case of $|\alpha|=|\beta|=1$ and $\gamma = \alpha\uesen{\beta}$ rotating an irrational angle, we set $\gamma = e^{2\pi i\xi}$, where $\xi$ is a irrational number.
In complex analysis, we introduce the following fact:
\begin{align}
\label{formula}
\lim_{T\rightarrow\infty}\frac{1}{T}\sum_{t=0}^\infty{\mathcal F}(e^{2\pi i\xi t}) =
  \int_0^{2\pi} \mathcal{F}(e^{2\pi i\theta}) d\theta, 
\end{align}
for any continuous function $\mathcal F$. 
By combining (\ref{formula}) and the residue theorem in complex analysis, we obtain
\begin{align*}
\lim_{T\rightarrow\infty}\frac{1}{T}\sum_{t=0}^\infty\frac{1}{1-\gamma^t T'} 
=\int_0^{2\pi} \frac{1}{1-e^{2\pi i\theta}T'} d\theta
= \frac{1}{2\pi i}\int_{|x|=1} \frac{1}{(1-xT')x} dx
= 1.
\end{align*}

\section{Analytic solution of $\A_2$ (Eq.~\eqref{eq:rotation_A2}) using Szeg\"o kernel}
\label{analytic solution of A2}

In this appendix, we show the derivation of
\begin{equation*}
\A_2\left(D_{z,\alpha}^2, D_{w, \beta}^2\right)
=
\begin{cases}
O(|zw|^{2\mu(\alpha,\beta)})&|\alpha|=|\beta|=1,\\
0&|\alpha|=1, |\beta|<1,\\
0&|\alpha|<1, |\beta|=1,\\
\frac{(1-|\alpha|^2)(1-|\beta|^2)}{|1-\alpha\overline{\beta}|^2}\cdot\frac{|1+\alpha\overline{\beta}|^2}{(1+|\alpha|^2)(1+|\beta|^2)}+O(|z\overline{w}|^2)
&|\alpha|,|\beta|<1,
\end{cases}
\end{equation*}
\hspace*{-1mm}where, for $\alpha=e^{2\pi i a}$ and $\beta=e^{2\pi i b}$, the integer $\mu(\alpha, \beta)$ is defined by 
\begin{equation*}
\mu(\alpha,\beta)
=
\begin{cases}
q&\text{$a\notin\ratnum$ or $b\notin\ratnum$ with $a-b=p/q$ with $(p,q)=1$},\\
+\infty&\text{$a\notin\ratnum$ or $b\notin\ratnum$ with $a-b\notin\ratnum$},\\
\min\left\{p+q~\big|~p,q\ge0,\,ap-bq\in\ratint \right\}&a,b\in\ratnum.
\end{cases}
\end{equation*}

Let $\varphi_n=z^n\in\mathcal{H}_k$ be an element of RKHS. We note that $\{\varphi_n\}_{n=0}^\infty$ is an orthonomal basis. Moreover, since $K_\alpha$ is the adjoint of the composition operator of $R_\alpha$, we have:
\[K_\alpha\varphi_n=\overline{\alpha}^n\varphi_n.\]
As in the proof of Proposition \ref{sufficient condition for convergence} in Appedinx \ref{proof of the proposition}, we have
\begin{align*}
&\knl_2^T\left(D_{z,\alpha}^2, D_{w, \beta}^2\right)\\
&=\sum_{p_1,p_2,q_1,q_2=0}^\infty\frac{1-\overline{\alpha}^{p_1T}\beta^{q_1T}}{1-\overline{\alpha}^{p_1}\beta^{q_1}}\cdot\frac{1-\overline{\alpha}^{p_2T}\beta^{q_2T}}{1-\overline{\alpha}^{p_2}\beta^{q_2}}\overline{\alpha^{p_2} z^{p_1+p_2}}\beta^{q_2} w^{q_1+q_2}\langle\varphi_{p_1}\wedge\varphi_{p_2}, \varphi_{q_1}\wedge\varphi_{q_2}\rangle\\
&=\sum_{\substack{p,q=0\\ p\neq q}}^\infty\frac{1-\overline{\alpha}^{pT}\beta^{pT}}{1-\overline{\alpha}^p\beta^p}\cdot\frac{1-\overline{\alpha}^{qT}\beta^{qT}}{1-\overline{\alpha}^q\beta^q}(\overline{\alpha}\beta)^{q} (\overline{z}w)^{p+q}\\
&\hspace{10pt}-\sum_{\substack{p,q=0\\p\neq 
q}}^\infty\frac{1-\overline{\alpha}^{pT}\beta^{qT}}{1-\overline{\alpha}^p\beta^q}\cdot\frac{1-\overline{\alpha}^{qT}\beta^{pT}}{1-\overline{\alpha}^q\beta^p}\overline{\alpha^q}\beta^p(\overline{z}w)^{p+q}.
\end{align*}
In particular, we see that
\begin{align*}
\knl_2^T\left(D_{z,\alpha}^2, D_{w, \beta}^2\right)=
\begin{cases}
O(T^2)&\text{ if }|\alpha|=|\beta|=1,\\
O(T)&\text{ if }|\alpha\beta|<1.
\end{cases}
\end{align*}
Thus in the case of $|\alpha|=1, |\beta|<1$ or $|\beta|=1, |\alpha|<1$, we have
\[\A_2\left(D_{z,\alpha}^2, D_{w, \beta}^2\right)=0.\]
The other cases are proved in a straight way.


\end{document}